\definecolor{color1bg}{HTML}{f73d28}
\definecolor{color2bg}{HTML}{FA8072}
\definecolor{bblue}{HTML}{00BFFF}
\definecolor{bblue2}{HTML}{00ffff}
\tikzset{
	>=stealth',
	help lines/.style={dashed, thick},
	axis/.style={<->},
	important line/.style={thick},
	connection/.style={thick, dotted},
}
\tikzset{
	diagonal fill/.style 2 args={fill=#2, path picture={
			\fill[#1, sharp corners] (path picture bounding box.south west) -|
			(path picture bounding box.north east) -- cycle;}},
	reversed diagonal fill/.style 2 args={fill=#2, path picture={
			\fill[#1, sharp corners] (path picture bounding box.north west) |- 
			(path picture bounding box.south east) -- cycle;}}
}
\newcounter{as}[section]
\DeclareMathOperator*{\argminA}{arg\,min}
\title[Error Rates for Physics-informed Statistical Learning]{Complexity Dependent Error Rates for Physics-informed Statistical Learning via the Small-ball Method}
\author{Diego Marcondes\\ $\;$ \\\today}
\address{Mathematical Sciences Institute and France-Australia Mathematical Sciences and Interactions ANU-CNRS International Research Lab, The Australian National University  \\
	e-mail: \texttt{diego.marcondes@anu.edu.au}}
\newtheorem{theorem}{Theorem}[section]
\newtheorem{remark}[theorem]{Remark}
\newtheorem{definition}[theorem]{Definition}
\newtheorem{lemma}[theorem]{Lemma}
\newtheorem{assumption}[theorem]{Assumption}
\newcommand{\mc}[1]{{\mathcal #1}}
\newcommand{\ms}[1]{{\mathscr #1}}
\newcommand{\ustar}{u^{\star}}
\newcommand{\hhat}{\hat{h}_{n,\Psi}}
\newcommand{\fstar}{f^{\star}}
\newcommand{\hstar}{h^{\star}}
\newcommand{\cH}{\text{conv } \mathscr{H}}
\newcommand{\Hstar}{\mathscr{H}^{\star}}
\newcommand{\tPsi}{\widetilde{\Psi}}
\definecolor{bblue}{rgb}{.2,0.2,.8}
\begin{document}
	\maketitle
	
	\begin{abstract}
		Physics-informed statistical learning (PISL) integrates empirical data with physical knowledge to enhance the statistical performance of estimators. While PISL methods are widely used in practice, a comprehensive theoretical understanding of how informed regularization affects statistical properties is still missing. Specifically, two fundamental questions have yet to be fully addressed: (1) what is the trade-off between considering soft penalties versus hard constraints, and (2) what is the statistical gain of incorporating physical knowledge compared to purely data-driven empirical error minimisation. In this paper, we address these questions for PISL in convex classes of functions under physical knowledge expressed as linear equations by developing appropriate complexity dependent error rates based on the small-ball method. We show that, under suitable assumptions, (1) the error rates of physics-informed estimators are comparable to those of hard constrained empirical error minimisers, differing only by constant terms, and that (2) informed penalization can effectively reduce model complexity, akin to dimensionality reduction, thereby improving learning performance. This work establishes a theoretical framework for evaluating the statistical properties of physics-informed estimators in convex classes of functions, contributing to closing the gap between statistical theory and practical PISL, with potential applications to cases not yet explored in the literature.
		
		\vspace{0.25cm}
		\noindent \textbf{Keywords}: physics-informed, statistical learning, small-ball method, complexity dependent error rates
	\end{abstract}

\section{Introduction}

Let $X$ be a random variable taking values in a set $\Omega \subset \mathbb{R}^{d}$ with distribution $\mu$ and $Y = \ustar(X) + e$ for $\ustar$ in the Sobolev space $H^{s}(\mu)$ and a mean zero random variable $e$ independent of $X$. Although $\ustar$ is unknown, we assume there is prior knowledge that it satisfies a differential equation
\begin{linenomath}
	\begin{align}
		\label{dif_eq}
		\mathscr{D}\ustar = g 
	\end{align}
\end{linenomath}
for a linear differential operator $\mathscr{D}: H^{s}(\mu) \mapsto L_{2}(\mu)$ and $g \in L_{2}(\mu)$. We are interested on the problem of estimating $\ustar$ based on an independent sample $S_{n} = \{(X_{1},Y_{1}),\dots,(X_{n},Y_{n})\}$ of $(X,Y)$ taking into account that $\ustar$ satisfies \eqref{dif_eq}.

For this, we first fix a class $\ms{H} \subset H^{s}(\mu)$ that we believe to have functions that well approximate $\ustar$. Formally, defining
\begin{linenomath}
	\begin{equation*}
		\hstar \coloneqq \argminA_{h \in \ms{H}} \mathbb{E}(h(X) - Y)^{2} =  \argminA_{h \in \ms{H}} \lVert h - \ustar \rVert_{L_{2}(\mu)},
	\end{equation*}
\end{linenomath}
which for now we assume is unique, it should hold $\lVert \hstar - \ustar \rVert_{L_{2}(\mu)} \approx 0$. Then, we consider estimators that minimise the \textit{physics-informed} mean squared error
\begin{linenomath}
	\begin{equation}
		\label{PINN_problem}
		L_{n,\Psi}(h) = L_{n}(h) + \lambda \ \Psi(h) \coloneqq \frac{1}{n} \sum_{i=1}^{n} (h(X_{i}) - Y_{i})^{2} + \lambda \ \lVert \mathscr{D}h - g \rVert_{L_{2}(\mu)}
	\end{equation}
\end{linenomath}
for $\lambda > 0$ fixed and $h \in \ms{H}$, which is obtained by adding a \textit{soft penalty} $\Psi(h)$ to the empirical error $L_{n}(h)$, that penalizes functions which deviate from \eqref{dif_eq}. These are estimators $\hhat$ satisfying
\begin{linenomath}
	\begin{equation}
		\label{informed_est}
		\hhat \in \argminA_{h \in \ms{H}} L_{n,\Psi}(h).
	\end{equation}
\end{linenomath}

From a theoretical perspective, physics-informed penalization is not the natural way of taking into account that $\ustar$ satisfies \eqref{dif_eq} since, in principle, one could obtain an estimator by minimising the empirical error in the set 
\begin{linenomath}
	\begin{align*}
		\mathscr{H}_{\Psi,\epsilon} \coloneqq \left\{h \in \mathscr{H}: \Psi(h) \leq \epsilon\right\} \neq \emptyset, & & \epsilon \geq 0
	\end{align*}
\end{linenomath}
of the functions in $\mathscr{H}$ that approximately satisfies \eqref{dif_eq}, which should be a statistically better estimator than that obtained by minimising the plain empirical error $L_{n}(h)$ over $\mathscr{H}$. In this case, prior information \eqref{dif_eq} yields a hard constraint on the set $\ms{H}$. However, in practice, this may not be computationally viable and informed regularization could be an alternative so the prior information $\Psi(\ustar) = 0$ is not disregarded. 

In this context, there are two important questions about the statistical properties of the error $\lVert \hhat - \hstar \rVert_{L_{2}(\mu)}$ of physics-informed estimators:
\begin{itemize}
	\item[] \textbf{(A):} \textit{What is the price we pay by considering a soft penalty instead of a hard constraint on $\mathscr{H}$, that is, how the error rate of $\hhat$ compares to that of empirical error minimisers of $\mathscr{H}_{\Psi,\epsilon}$?}
	\item[] \textbf{(B):} \textit{What is the statistical effect of physics-informed penalization, that is, how the error rate of $\hhat$ compares to that of plain empirical error minimisers of $\mathscr{H}$?}
\end{itemize}
These are important questions in physics-informed statistical learning that are open problems for general classes $\mathscr{H}$. In this paper, we deduce high probability bounds for $\lVert \hhat - \hstar \rVert_{L_{2}(\mu)}$ based on the small-ball method of \cite{mendelson2015learning}, taking a step towards answering these questions in the case of linear operators $\ms{D}$ and classes of functions $\ms{H}$ in a family that contains all closed convex classes.

\subsection{Physics-informed statistical learning} 
\label{sec_PISL}

The problem of minimising \eqref{PINN_problem} is a special case of physics-informed statistical learning (PISL) which combines empirical data with physics knowledge to improve the performance of estimators. The foremost PISL method are the physics-informed neural networks (PINNs) when $\mathscr{H}$ is generated by a neural network architecture. PINNs were proposed by \cite{raissi2019physics} for solving forward and inverse problems related to PDEs and are usually applied as meshless numerical solvers.

As an example of PISL, consider the forward problem
\begin{linenomath}
	\begin{align}
		\label{forward}
		\begin{cases}
			\mathscr{D}u = g, & \text{ in } \Omega_{x} \times [0,T] \\
			\mathscr{B}u = b, & \text{ in } \partial\Omega_{x} \times (0,T]\\
			u = u_{0}, &  \text{ in } \bar{\Omega}_{x} \times \{0\}
		\end{cases}
	\end{align}
\end{linenomath}
in which $\Omega = \Omega_{x} \times (0,T)$, $\mathscr{B}: H^{s}(\mu_{\partial}) \mapsto L_{2}(\partial \Omega_{x} \times [0,T],\mu_{\partial})$ represents boundary conditions, $b \in L_{2}(\partial \Omega_{x} \times (0,T],\mu_{\partial})$, and $u_{0} \in L_{2}(\Omega_{x},\mu_{x})$ is the initial condition, assuming that $\Omega_{x} \subset \mathbb{R}^{d}$ is an open set, $\bar{\Omega}_{x}$ is compact, $T  > 0$ is fixed, and $\mu_{\partial}$ and $\mu_{x}$ are suitable measures, e.g., Lebesgue measure.

If the solution of \eqref{forward} is unique, then it is the unique solution of
\begin{linenomath}
	\begin{equation}
		\label{complete_pen}
		\lambda \ \lVert \mathscr{D}u - g \rVert_{L_{2}(\mu)} + \lambda^{b} \ \lVert \mathscr{B}u - b \rVert_{L_{2}(\mu_{\partial})} + \lambda^{0} \ \lVert u - u_{0} \rVert_{L_{2}(\mu_{x})} = 0
	\end{equation}
\end{linenomath}
for any $\lambda,\lambda^{b},\lambda^{0} > 0$ fixed. Therefore, if the boundary and initial conditions are known, the solution of \eqref{forward} might be approximated by minimising the left-hand side of \eqref{complete_pen} for $u$ in a rich class of functions $\ms{H}$, e.g., generated by a neural network architecture. We refer to the reviews \cite{cuomo2022scientific,karniadakis2021physics,meng2025physics,von2021informed} for more details about PINNs.

However, in inverse problems where boundary or initial conditions are unknown, the numerical approximation problem \eqref{complete_pen} is ill-posed and data on the solution is necessary. In this instance, approximately solving \eqref{forward} becomes a statistical problem that might be solved by a physics-informed estimator that minimises \eqref{PINN_problem} by adding the respective penalty to the regularization term $\Psi(f)$ if either boundary or initial conditions are known. Unlike forward problems, in this case there is incomplete information about $\ustar$ that must be combined with data, hopefully allowing to approximately solve \eqref{forward}.

This class of inverse problems is important, since in practice initial and boundary conditions are hardly known, but data on the solution may be measured over time by sensors placed in $\Omega_{x}$ which return noisy data of form $\ustar(X,t) + e$. Moreover, in this case, physics-informed estimators are usually among the few practical learning methods\footnote{We note that there are many numerical analysis methods for recovering $\ustar$ in inverse problems \cite{hasanouglu2021introduction}, but here we are concerned with learning methods only.} to approximate $\ustar$ taking into consideration that it satisfies \eqref{dif_eq}, apart from probabilistic numerical methods such as physics-informed Gaussian processes (see for example \cite[Chapter~12]{braga2024fundamentals}). For instance, computational constraints may prevent efficiently minimising the empirical error $L_{n}$ in the set
\begin{linenomath}
	\begin{equation}
		\label{H_psi}
		\ms{H}_{\Psi} \coloneqq \left\{h \in \mathcal{H}: \Psi(h) \leq \Psi(h^{\star})\right\},
	\end{equation}
\end{linenomath}
even if $\Psi(h^{\star})$ is known, e.g., when $\hstar$ satisfies \eqref{dif_eq} and $\Psi(\hstar) = 0$, so adding a soft penalty to the empirical error is a more efficient procedure than hard constraints to take into account that the data generating function $\ustar$ satisfies \eqref{dif_eq}.

\subsection{The statistics of physics-informed regularization}

Even though PINNs are increasingly prevalent in the literature to solve inverse problems in science and engineering (see for example \cite{chen2020physics,jagtap2022physics}), the statistical properties of physics-informed regularization are not completely understood, even in simpler models. Actually, since the proposal of PINNs, their approximation error as numerical solvers of forward problems has been investigated for many classes of PDEs and neural network architectures (see \cite{de2024error2,de2022error,hu2022xpinns,mishra2023estimates,qian2023physics,shin2020convergence,shin2023error,wu2022convergence} and the references therein), but the statistical effect of penalizing the empirical error on sensor data by the known conditions of the PDE, an approach often called \textit{hybrid modelling} \cite{rai2020driven,von2023does}, has not been thoroughly investigated. 

The statistical properties of physics-informed estimators have been investigated in particular cases by \cite{arnone2022some,doumeche2023convergence,ferraccioli2022some}. The consistency of the estimator in spatial regression with general linear second-order PDE regularization has been established in \cite{arnone2022some} and inference tools for them were studied in \cite{ferraccioli2022some}. A review of this type of regularized spatial regression can be found in \cite{sangalli2021spatial}. In \cite{doumeche2023convergence} it is highlighted that solving \eqref{PINN_problem} in some cases may lead to overfitting and showed that this can be avoided by ridge regularization. The authors showed, in specific cases, the error-consistency under ridge regularization and analysed the strong convergence of PINNs.

To the best of our knowledge, there are only three results in the literature so far implying that physics-informed regularization improves the rate of convergence of empirical error minimisation estimators. The first is \cite{doumeche2024physics} that treats physics-informed kernel regression and linear PDEs. They showed that the solution of \eqref{PINN_problem}, with a Sobolev penalization, for $h$ in the respective Sobolev space, is in a reproducing kernel Hilbert space (RKHS), with kernel dependent on the PDE, and hence the problem reduces to kernel regression. Error rates for $\lVert \hhat - \hstar \rVert_{L_{2}(\mu)}$ then follow from results for regularized least squares such as \cite{caponnetto2007optimal} by making a sub-Gamma assumption about the noise $e$. The application of the result of \cite{doumeche2024physics} requires the deduction of a kernel associated to the linear operator $\mathscr{D}$ that is specific to learning in the respective RKHS. Based on the theoretical results of \cite{doumeche2024physics}, practical methods for physics-informed kernel learning were proposed in \cite{doumeche2024physics2}.

The second is \cite{koshizuka2025understanding} which combines a soft-penalty with variational methods to prove that the error rates in physics-informed regression are dependent on the dimension of an affine variety associated with the operator $\ms{D}$. Finally, the third is the recent work \cite{scampicchio2025physics}, which applies the complexity dependent bounds developed in \cite{lecue2017regularization} based on the small-ball method to establish error rates for PISL with dependent data for learning in Sobolev spaces when the operator $\ms{D}$ is elliptic and the forcing term in \eqref{dif_eq} is $g = 0$ .

This paper differs from \cite{doumeche2024physics,koshizuka2025understanding,scampicchio2025physics} in broadness and scope. While they focus on specific classes of functions, such as linear regression or specific subsets of Sobolev spaces, we present a general theory for PISL that applies, for instance, to closed and convex classes and general linear operators $\ms{D}$ even when $g \neq 0$. Instead of developing detailed bounds for a specific case, this paper proposes a general method to study the effect of physics-informed soft-penalty and deduce general results that allow to address \textbf{(A)} and \textbf{(B)} broadly. Since outside the scope of this paper, we leave to future research the specialisation of the proposed method to specific cases.

\subsection{Regularization in statistical learning}

Informed regularization, characterised by prior information that $\Psi(\ustar) = 0$ for the data generating function $\ustar$, differs in form and purpose from the usual regularization methods in statistical learning. Classical regularization methods, such as Ridge or Tikhonov regularization \cite{hoerl1970ridge,tikhonov1943stability} and LASSO \cite{tibshirani1996regression}, are concerned with increasing the stability and efficiency of estimators in instances where unregularized estimators tend to overfit the data and not generalise well. These regularization methods generate \textit{smoother} estimators with better statistical properties (see \cite{hastie2009elements} for more details).

From an ``informed'' perspective, this type of regularization, in which $\Psi$ is usually a norm, carries the implicit assumption that $\Psi(\ustar)$ is \textit{small} and the regularized estimator is obtained by minimising the empirical error $L_{n}(h)$ penalized by $\lambda\Psi(h)$ to \textit{avoid} functions with high values of $\Psi$ that are not counterbalanced by a low empirical error. However, since $\Psi$ is a norm, even if $\Psi(\ustar)$ is not \textit{small}, $\Psi(h)$ could be penalizing the lack of \textit{sparsity}, in some sense,  of $h$, so if it is known or believed that $\ustar$ is sparse, a norm regularization would be somewhat ``informed'' and could aid on recovering it. We refer to \cite{lecue2017regularization,lecue2018regularization} for a further discussion about the interpretation of this type of regularization as the \textit{smoothing} of empirical error minimisation estimators or as sparse recovery methods.

Considering $\Psi$ as a norm is a strong restriction from an informed regularization point of view, specially when the target function is not ``sparse''. For instance, in this case $\Psi(h) = 0$ if, and only if, $h = 0$, so it is not possible to represent general prior information in the form $\Psi(\ustar) = 0$ when $\Psi$ is a norm. Therefore, to meaningfully represent prior information, such as equation \eqref{dif_eq}, more general regularization functions need to be considered and classical results about the statistical properties of norm-regularized estimators (see \cite{koltchinskii2011oracle} for examples) do not readily apply.

Error rates for more general regularization functions and classes of problems have been established based on the small-ball method \cite{lecue2017regularization,lecue2018regularization}. In this paper, we extend these results for physics-informed regularization functions of form $\Psi(h) = \lVert \mathscr{D}h - g \rVert_{L_{2}(\mu)}$ with $\ms{D}$ linear.

\subsection{Small-ball method and complexity dependent error rates}
\label{Sec1.4}

Classical error rates in statistical learning depend on the existence of uniform concentration inequalities guaranteeing that, as $n$ increases, the empirical error $L_{n}(h)$ concentrates uniformly around the expected error $L(h) = \mathbb{E}(h(X) - Y)^{2}$ for $h \in \mathscr{H}$. Uniform concentration is known to hold, for example, when the loss function is bounded if $\mathscr{H}$ has finite VC dimension, and distribution-free \cite{vapnik1998statistical} and distribution-dependent \cite{bartlett2005local,bartlett2002rademacher,koltchinskii2006local} error rates have been developed in this case. We refer to \cite{concentration} for an overview of concentration inequalities.

In order to address instances where there is no uniform concentration, such as the simple case of least squares linear regression with Gaussian noise or general regression problems with heavy-tailed losses, \cite{mendelson2015learning} and \cite{mendelson2018learning} derived sharp bounds on the performance of empirical error minimisation estimators under the small-ball property (cf. Assumption \ref{small_ball}) when $\mathscr{H}$ is a convex class, for the quadratic and general sufficiently smooth convex loss functions, respectively. The small-ball is a relatively weak assumption, and hence the bounds based on it are broadly applicable. We refer to the discussion in \cite{mendelson2015learning,mendelson2018learning} for more details.

The small-ball method allowed to derive error rates for regularized estimators under the quadratic loss function when $\Psi$ is a norm and the function class is convex \cite{lecue2018regularization}, and for a slightly broader class of regularization functions \cite{lecue2017regularization}. In particular, \cite{lecue2017regularization} established that the error rate of a regularized estimator in that case is surprisingly close to that of empirical error minimisation in the constrained set $\mathscr{H}_{\Psi}$ (cf. \eqref{H_psi}), even when $\Psi(\hstar)$ is unknown. The error rates of \cite{lecue2017regularization} are called \textit{complexity dependent}, rather than \textit{sparsity-dependent}, since depend on the complexity of the ``true model'' $\mathscr{H}_{\Psi}$ and not on the ``sparsity'' of $\hstar$ as in the case where $\Psi$ is a norm.

The results of \cite{lecue2017regularization} do not apply to physics-informed regularization in general, since they assume that $\Psi$ is nonnegative, even, convex and $\Psi(0) = 0$; there exists $\eta \geq 1$ such that, for all $f,h \in \mathscr{H}$, $\Psi(f + h) \leq \eta(\Psi(f) + \Psi(h))$; and $\Psi(\alpha h) \leq \alpha\Psi(h)$ for all $h \in \mathscr{H}$ and $0 \leq \alpha \leq 1$. Some of these are strong restrictions from the perspective of informed regularization, and clearly do not hold in general for $\Psi(h) = \lVert \mathscr{D}h - g \rVert_{L_{2}(\mu)}$. In \cite{scampicchio2025physics} it was assumed that $\ms{D}$ is elliptic and $g = 0$ because in this special case the regularization function $\Psi(h) = \lVert \mathscr{D}h \rVert_{L_{2}(\mu)}^{2}$ satisfies the assumptions of \cite{lecue2017regularization} and complexity dependent error rates could be deduced. In general cases, the results of \cite{lecue2017regularization} do not apply.

In this paper, we extend the complexity dependent bounds of \cite{lecue2017regularization} for physics-informed regularization when the operator $\mathscr{D}$ is linear, $g \neq 0$, and, for instance, $\mathscr{H}$ is closed and convex. Moreover, we drop the assumption that $\mathscr{H}$ is convex by assuming that uniform concentration holds in the subset of target functions of $\mathscr{H}$ to deduce a bound for $\Psi(\hhat)$, that is, for how well the informed estimator satisfies \eqref{dif_eq}.

\subsection{Main contributions}

The main contributions of this paper are:
\begin{itemize}
	\item We extend the complexity dependent error bounds for regularized learning in convex classes of functions of \cite{lecue2017regularization} to informed regularization functions of form $\Psi(h) = \lVert \mathscr{D}h - g \rVert_{L_{2}(\mu)}$ with $\mathscr{D}$ linear, and
	\item under the assumption that uniform concentration holds in
	\begin{linenomath}
		\begin{equation*}
			\mathscr{H}^{\star} = \left\{h \in \mathscr{H}: \mathbb{E}(h(X) - Y)^{2} \leq \inf_{h \in \ms{H}} \mathbb{E}(h(X) - Y)^{2} + \gamma\right\},
		\end{equation*}
	\end{linenomath}
	for a $\gamma \geq 0$, we show that with high probability the estimated function $\hhat$ satisfies $\Psi(\hhat) - \sup_{h \in \mathscr{H}^{\star}} \Psi(h) \leq \rho$ for a constant $\rho \geq 0$ that depends on $\ms{H}$ and on the data generating distribution.
\end{itemize}

Although applicable to slightly more general scenarios, the main result of this paper is a first step towards answering questions \textbf{(A)} and \textbf{(B)} for $\ms{H}$ convex, in particular when $\Psi(\hstar) = 0$, that is, when the target function of $\ms{H}$ satisfies the prior information \eqref{dif_eq}.

In this instance, as was the case in \cite{lecue2017regularization}, we show that there is a surprisingly small price to pay for considering a physics-informed estimator rather than learning in $\mathscr{H}_{\Psi}$, what addresses question \textbf{(A)}. In particular, we obtain error rates for the informed estimator $\hhat$ that differ by constant terms only from the error rate for empirical error minimisation in $\ms{H}_{\Psi}$ obtained via the small-ball method \cite{mendelson2015learning}.

Regarding question \textbf{(B)}, we show that the error rates for physics-informed estimators are related with the \textit{complexity} of $\ms{H}_{\Psi}$, that can be much lesser than that of $\ms{H}$. In particular, when $\Psi(\hstar) = 0$, $\ms{H}_{\Psi} = \text{ker}(\ms{D} - g) \cap \ms{H}$, so this complexity is associated with the kernel of $\ms{D} - g$. When $g = 0$ and $\ms{H}$ is finite dimensional, $\text{ker}(\ms{D}) \cap \ms{H}$ usually has a dimension lesser than that of $\ms{H}$, so physics-informed penalization has the same effect on error rates as dimensionality reduction. This fact has also been observed in the specific cases treated in \cite{doumeche2024physics,koshizuka2025understanding,scampicchio2025physics}.

The main result of this paper is more broad than that of \cite{doumeche2024physics,koshizuka2025understanding,scampicchio2025physics}, and seem to be the first in the literature that can in principle be applied to more general frameworks of physics-informed statistical learning in the case of linear differential equations to evaluate the statistical effect of informed penalization.

\subsection{Paper structure}

In Section \ref{Sec1}, we present the reasoning that leads to the small-ball method in the context of informed penalization, and we state our results. In Section \ref{SecImplications}, we discuss their main implications that shed light on questions \textbf{(A)} and \textbf{(B)}. In Section \ref{SecPoss}, we present extensions and limitations of our results, and in Section \ref{SecFR} we give our final remarks. Proofs for the results are given in Section \ref{SecProof}.

\section{Error Rates for Physics-informed Statistical Learning}
\label{Sec1}

In this section, we state the main results of this paper, that are proved in Section \ref{SecProof}. We start by formalising two assumptions about $\mu$ and $\mathscr{H}$ in Sections \ref{SecSB} and \ref{SecConc}: that a small-ball property is satisfied in the convex-hull of $\mathscr{H}$, and that a concentration inequality holds in the subset of target functions, respectively. Then, in Section \ref{SecCandidates} we discuss the candidate functions for $\hhat$ (cf. \eqref{informed_est}), in Section \ref{SecEL} we discuss how the excess loss can be bounded over the convex hull of $\ms{H}$ and in Section \ref{SecRC} we define the complexity measures that bounds derived from the small-ball method depend on. The content of Sections \ref{SecCandidates} to \ref{SecRC} is adapted from \cite{lecue2017regularization} to our context. Finally, in Section \ref{SecMR} we state our results.

\subsection{Small-ball property of convex-hull}
\label{SecSB}

The small-ball property allows the derivation of sharp bounds on the estimation error of empirical error minimisation procedures without relying on concentration inequalities as has been classically done in the theory of empirical processes (see \cite{koltchinskii2006local,koltchinskii2011oracle} and the references therein). We refer to \cite{koltchinskii2015bounding,mendelson2014remark,mendelson2015learning,mendelson2018learning} for more details about the so-called small-ball method.

We assume that $\ms{F} = \widebar{\cH}$, that is the closure in $L_{2}(\mu)$ of the class of convex combinations of functions in $\ms{H}$, satisfies the following small-ball assumption. If $\mathscr{H}$ is closed and convex, then  $\ms{F} = \ms{H}$.

\begin{assumption}
	\label{small_ball}
	There exist constants $\kappa > 0$ and $0 < \epsilon \leq 1$ such that, for all $f,h \in \mathscr{F}$,
	\begin{linenomath}
		\begin{equation}
			Pr(\left|f(X) - h(X)\right| \geq \kappa \lVert f - h \rVert_{L_{2}(\mu)}) \geq \epsilon.
		\end{equation}
	\end{linenomath}
\end{assumption}

As opposed to concentration inequalities, that rely heavily on $f(X)$ being bounded or not having heavy-tails, uniformly for $f \in \mathscr{F}$, bounds based on Assumption \ref{small_ball} are more broadly applicable since this assumption is relatively weak (see the discussion in \cite{lecue2017regularization,lecue2018regularization,mendelson2015learning,mendelson2018learning}). For instance, it holds if there exists a constant $\kappa_{1} > 0$ such that
\begin{linenomath}
	\begin{equation*}
		\lVert f - h \rVert_{L_{2}(\mu)} \leq \kappa_{1} \lVert f - h \rVert_{L_{1}(\mu)}
	\end{equation*}
\end{linenomath}
or a constant $\kappa_{2} > 0$ such that, for $p > 2$,
\begin{linenomath}
	\begin{equation*}
		\lVert f - h \rVert_{L_{p}(\mu)} \leq \kappa_{2} \lVert f - h \rVert_{L_{2}(\mu)}
	\end{equation*}
\end{linenomath}
for all $f,h \in \mathscr{F}$ (see \cite[Lemma~4.1]{mendelson2015learning}).

\subsection{Concentration for target functions}
\label{SecConc}

When $\ms{H}$ is not convex, we assume that a concentration inequality is satisfied in the set of target functions of $\ms{H}$. Formally, fix a $\gamma \geq 0$ and let\footnote{We assume that the infimum in \eqref{Hstar} and all other infimum and supremum we consider in this paper are mensurable to avoid technicalities.}
\begin{linenomath}
	\begin{equation}
		\label{Hstar}
		\Hstar \coloneqq \Hstar(\gamma) = \left\{h \in \ms{H}: L(h) \leq \inf\limits_{h \in \ms{H}} L(h) + \gamma\right\}
	\end{equation}
\end{linenomath}
be the space of target functions of $\mathscr{H}$, recalling that $L(h) \coloneqq \mathbb{E}(h(X) - Y)^{2}$. If the infimum of $L$ is achieved in $\ms{H}$ we set $\gamma = 0$. We assume that an uniform concentration inequality holds in $\Hstar$.

\begin{assumption}
	\label{concentration}
	There exists a sequence $\{d_{n}(\epsilon)\}$ depending on the constant $\epsilon$ of Assumption \ref{small_ball}, that converges to zero as $n \to \infty$, such that
	\begin{linenomath}
		\begin{equation*}
			Pr\left(\sup\limits_{h \in \ms{H}^{\star}} |L_{n}(h) - L(h)| \leq d_{n}(\epsilon)\right) \geq 1 - 2\exp(-n\epsilon^{2}/n).
		\end{equation*}
	\end{linenomath}
\end{assumption}

Assumption \ref{concentration} holds whenever a uniform concentration inequality holds in $\mathscr{H}$, for instance when $\mathscr{H}$ is a set of functions bounded in $L_{\infty}(\mu)$ with finite VC-dimension under the quadratic loss (see \cite{vapnik1998statistical} for more details). But we emphasise that Assumption \ref{concentration} holds whenever these conditions are satisfied by $\mathscr{H}^{\star}$ even if uniform concentration does not hold in $\mathscr{H}$. 

A weaker condition for Assumption \ref{concentration} to hold is implied by the relative deviation learning bounds for unbounded loss functions deduced by \cite{cortes2019relative} when $\mathscr{H}^{\star}$ has finite VC dimension under the quadratic loss. Assuming that $\mathbb{E}[e^{\alpha}] < \infty$ for some $\alpha > 2$, it follows that
\begin{linenomath}
	\begin{align}
		\label{bounded_risk_Hs}
		\sup\limits_{h \in \ms{H}^{\star}} L(h) \leq \inf\limits_{h \in \ms{H}} \lVert h - \ustar \rVert_{L_{2}(\mu)}^{2} + \mathbb{E}[e^{2}] + \gamma < \infty,
	\end{align}
\end{linenomath}
so the error $L(h)$ is uniformly bounded in $\mathscr{H}^{\star}$. Denote $L^{\alpha}(h) \coloneqq \mathbb{E}(h(X) - Y)^{\alpha}$ for $\alpha > 2$. If $L^{\alpha}(h)$ is uniformly bounded in $\mathscr{H}^{\star}$, and since \eqref{bounded_risk_Hs} holds, it follows from the results in \cite{cortes2019relative} an exponential bound for
\begin{linenomath}
	\begin{align*}
		Pr\left(\sup\limits_{h \in \ms{H}^{\star}} \frac{|L_{n}(h) - L(h)|}{\left[L^{\alpha}(h) + \varsigma\right]^{1/\alpha}} \geq a\right)
	\end{align*}
\end{linenomath}
for $a > 0$ and any $\varsigma > 0$ fixed, from which Assumption \ref{concentration} follows.

The scenarios discussed above are far from exhaustive, and Assumption \ref{concentration} may hold in particular cases due to other reasons. 

\subsection{Candidates for $\hhat$}
\label{SecCandidates}

For every sample $S_{n} = \{(X_{1},Y_{1}),\dots,(X_{n},Y_{n})\}$, define
\begin{linenomath}
	\begin{equation*}
		L_{n}(\hstar) \coloneqq \inf_{h \in \ms{H}^{\star}} L_{n}(h) = \inf_{h \in \ms{H}^{\star}} \frac{1}{n} \sum_{i=1}^{n} (h(X_{i}) - Y_{i})^{2}
	\end{equation*}
\end{linenomath}
as the infimum empirical error in the set of target functions. Let $$L(\hstar) \coloneqq \inf_{h \in \ms{H}} L(h) = \inf_{h \in \ms{H}} \mathbb{E}(h(X) - Y)^{2}$$ and define
\begin{linenomath}
	\begin{align*}
		\Psi(\hstar) \coloneqq \sup\limits_{h \in \ms{H}^{\star}} \Psi(h) = \sup\limits_{h \in \ms{H}^{\star}} \lVert \mathscr{D}h - g \rVert_{L_{2}(\mu)}
	\end{align*}
\end{linenomath}
as the supremum of the penalization function in $\Hstar$. Recalling the definition of $L_{n,\Psi}$ in \eqref{PINN_problem}, observe that every minimiser $\hhat$ of $L_{n,\Psi}$ in $\ms{H}$ satisfies
\begin{linenomath}
	\begin{equation}
		\label{cond_min}
		L_{n}(\hhat) - L_{n}(\hstar) \leq \lambda(\Psi(\hstar) - \Psi(\hhat))
	\end{equation}
\end{linenomath}
since, for all $h \in \ms{H}^{\star}$,
\begin{linenomath}
	\begin{align*}
		L_{n}(\hhat) + \lambda \Psi(\hhat) & \leq L_{n}(h) + \lambda \Psi(h) \leq L_{n}(h) + \lambda\Psi(\hstar). 
	\end{align*}
\end{linenomath}

The analysis we carry in this paper, which is adapted from \cite{lecue2017regularization}, relies on \eqref{cond_min} to exclude from $\ms{H}$ the functions that do not satisfy it as possible minimisers of $L_{n,\Psi}$. In particular, we aim to show that if $h \in \ms{H}$ and $\inf_{\hstar \in \Hstar} \lVert h - \hstar \rVert_{L_{2}(\mu)}$ is not \textit{small}, then it does not satisfy \eqref{cond_min} and cannot be a minimiser of $L_{n,\Psi}$. This implies that $\hhat$ should be \textit{close} to a function in $\Hstar$.

\subsection{Excess loss in the convex hull of $\ms{H}$}
\label{SecEL}

Let
\begin{linenomath}
	\begin{equation*}
		\fstar = \argminA_{f \in \ms{F}} \mathbb{E}(f(X) - Y)^{2}
	\end{equation*}
\end{linenomath}
be the unique minimiser of the mean squared error in the convex class $\ms{F} = \widebar{\cH}$. Define
\begin{linenomath}
	\begin{equation*}
		\xi = \fstar(X) - Y = \fstar(X) - \ustar(X) - e
	\end{equation*}
\end{linenomath}
that reduces to $\xi = -e$ if $\ustar \in \mathscr{F}$. For $f \in \ms{F}$, define
\begin{linenomath}
	\begin{equation*}
		\ell_{f}(X,Y) = (f(X) - Y)^{2} = (f(X) - \fstar(X) + \xi)^{2}
	\end{equation*}
\end{linenomath}
as the squared error of $f$ on $(X,Y)$ and 
\begin{linenomath}
	\begin{align*}
		\mathcal{L}_{f}(X,Y) = \ell_{f}(X,Y) - \ell_{\fstar}(X,Y)
	\end{align*}
\end{linenomath}
as the excess loss of $f$ in $\ms{F}$. We denote by $P_{n}\mathcal{L}_{f}$ the empirical mean of $\mathcal{L}_{f}(X,Y)$ in sample $S_{n}$.

Since
\begin{linenomath}
	\begin{equation}
		\label{main_inequality}
		P_{n}\mathcal{L}_{h} - \lambda L_{n}(\hstar) = L_{n}(h) - L_{n}(\fstar) - \lambda L_{n}(\hstar) \leq L_{n}(h) - L_{n}(\hstar)
	\end{equation}
\end{linenomath}
for all $h \in \ms{H} \subset \ms{F}, \lambda \geq 1$ and sample $S_{n}$, it follows from \eqref{cond_min} that we can disregard as candidates for $\hhat$ all $h \in \ms{H}$ such that
\begin{linenomath}
	\begin{align}
		\label{cond_in_F}
		P_{n}\mathcal{L}_{h} > \lambda (L_{n}(\hstar) + \Psi(\hstar) - \Psi(h)).
	\end{align}
\end{linenomath}

If $\ms{H}$ is closed and convex, we get a simplified version of \eqref{cond_in_F}. In this case, since $\ms{H}^{\star} = \{\fstar\}$, it follows from \eqref{cond_min} that, for any $\lambda > 0$, we can disregard as candidates for $\hhat$ all $h \in \ms{H}$ satisfying
\begin{equation}
	\label{cond_in_F2}
	P_{n}\mathcal{L}_{h} > \lambda\left(\Psi(\fstar) - \Psi(h)\right).
\end{equation}
In either case, to identify functions $f \in \ms{F} \supset \mathscr{H}$ that satisfy \eqref{cond_in_F} or \eqref{cond_in_F2} we need to obtain lower bounds for $P_{n}\mathcal{L}_{f}$.

Since $\ms{F}$ is a closed convex set, by the characterisation of the projection in convex Hilbert spaces,
\begin{linenomath}
	\begin{equation}
		\label{projection}
		\mathbb{E}\xi(f - \fstar)(X) = \mathbb{E}(\fstar(X) - Y)(f - \fstar)(X) \geq 0
	\end{equation}
\end{linenomath}
for all $f \in \ms{F}$ and hence
\begin{linenomath}
	\begin{align}
		\label{lb_PnL} \nonumber
		P_{n}\mathcal{L}_{f} &= \frac{1}{n} \sum_{i=1}^{n}(f - \fstar)^{2}(X_{i}) + \frac{2}{n}\sum_{i=1}^{n} \xi_{i}(f - \fstar)(X_{i})\\ \nonumber
		&\geq \frac{1}{n} \sum_{i=1}^{n}(f - \fstar)^{2}(X_{i}) + \frac{2}{n}\sum_{i=1}^{n} \left[\xi_{i}(f - \fstar)(X_{i}) - \mathbb{E}(\fstar(X) - Y)(f - \fstar)(X)\right]\\
		&\geq P_{n}\mathcal{Q}_{f-\fstar} - 2|P_{n}\mathcal{M}_{f - \fstar}|,
	\end{align}
\end{linenomath}
in which $\xi_{i} = \fstar(X_{i}) - Y_{i}$,
\begin{linenomath}
	\begin{align*}
		\mathcal{Q}_{f-\fstar}(X) = (f - \fstar)^{2}(X)\, , & &  \text{ and } & & \mathcal{M}_{f - \fstar}(X,\xi) = \xi(f - \fstar)(X) - \mathbb{E}\xi(f - \fstar).
	\end{align*}
\end{linenomath}

Therefore, to disregard \textit{bad} functions as candidates for $\hhat$ it is enough to obtain lower bounds for $P_{n} \mathcal{Q}_{h-\fstar}$ and upper bounds for $|P_{n}\mathcal{M}_{h - \fstar}|$ for $h \in \ms{H}$ such that $\inf_{\hstar \in \Hstar} \lVert h - \hstar \rVert_{L_{2}(\mu)}$ is \textit{large}, and compare them with $\lambda(L_{n}(\hstar) + \Psi(\hstar) - \Psi(h))$ to attest that $P_{n}\mathcal{L}_{h}$ is greater than it. This implies that $\hhat$ should be \textit{close} to a function in $\Hstar$. The processes $P_{n} \mathcal{Q}_{h-\fstar}$ and $P_{n}\mathcal{M}_{h - \fstar}$ are called, respectively, the product and multiplier empirical processes. We refer to \cite{lecue2017regularization,lecue2018regularization} for more details on this kind of argument and to \cite{mendelson2016upper} for bounds on the product and multiplier processes in special cases.

We note that in the noiseless case, when $\xi = 0$ with probability one, $|\mc{M}_{h - \fstar}(X,\xi)| = 0$ with probability one, so only bounds on the product process are necessary.

\subsection{Local Rademacher complexities}
\label{SecRC}

The results of this paper depend on the following bounds on Rademacher processes in $\ms{F}$. Denote by $D$ the unit ball in $L_{2}(\mu)$ and for $r > 0$ define
\begin{linenomath}
	\begin{equation*}
		rD_{\fstar} = \{f \in L_{2}(\mu): \lVert f - \fstar \rVert_{L_{2}(\mu)} \leq r\}.
	\end{equation*}
\end{linenomath}

\begin{definition}
	\label{def_rad}
	Given a class $\mathscr{G} \subset L_{2}(\mu)$ and $\tau > 0$, let
	\begin{linenomath}
		\begin{equation*}
			r_{Q}(\mathscr{G},\tau) \coloneqq r_{Q}(\mathscr{G},\tau,\fstar,n) = \inf\left\{r > 0 : \mathbb{E} \sup\limits_{f \in \mathscr{G} \cap r D_{\fstar}} \left|\frac{1}{n} \sum_{i=1}^{n} \varepsilon_{i}(f - \fstar)(X_{i})\right| \leq \tau r\right\}
		\end{equation*}
	\end{linenomath}
	in which $\varepsilon_{1},\dots,\varepsilon_{n}$ are independent, symmetric, $\{-1,+1\}$-valued random variables independent of $S_{n}$, and the expectation is taken on the variables $\varepsilon_{i}$ and $S_{n}$. Set
	\begin{linenomath}
		\begin{equation*}
			\phi_{n}(\mathscr{G},\fstar,s) = \sup\limits_{f \in \mathscr{G} \cap sD_{\fstar}} \left|\frac{1}{\sqrt{n}} \sum_{i=1}^{n} \varepsilon_{i}\xi_{i}(f - \fstar)(X_{i})\right|
		\end{equation*}
	\end{linenomath}
	and define
	\begin{linenomath}
		\begin{equation*}
			r_{M}(\mathscr{G},\tau,\delta) \coloneqq r_{M}(\mathscr{G},\tau,\delta,\fstar,n) = \inf\left\{s > 0: Pr\left(\phi_{n}(\mathscr{G},\fstar,s) \leq \tau s^{2} \sqrt{n}\right) \geq 1 - \delta\right\}.
		\end{equation*}
	\end{linenomath}
\end{definition}

On the one hand, the parameter $r_{Q}(\mathscr{G},\tau)$ represents the rate at which the \textit{local} Rademacher averages of $(f - \fstar)(X)$ in the balls of $\ms{G}$ centred at $\fstar$ decay in expectation as a function of $n$. By the usual symmetrisation arguments of empirical processes (see \cite[Section~11.3]{concentration}), under the small-ball assumption, $r_{Q}(\mathscr{G},\tau)$ is related with the expectation of the product process. On the other hand, the parameter $r_{M}(\mathscr{G},\tau,\delta)$ represents the rate at which the Rademacher averages of $\xi(f - \fstar)(X)$ in the balls of $\ms{G}$ centred at $\fstar$ decay in high probability as a function of $n$. Again by the usual symmetrisation arguments, $r_{M}(\mathscr{G},\tau,\delta)$ is related with the multiplier process.

These quantities control two possible sources of \textit{error} in statistical learning: the complexity of the version space and the data noise. The version space is the random set of functions $f \in \ms{F}$ which agree with $\fstar$ on the points $X_{i}$ in the sample: $f(X_{i}) = \fstar(X_{i})$. If the $L_{2}(\mu)$ diameter of the version space is great, that is, if there are functions which agree with $\fstar$ on the sample but are far from $\fstar$ in $L_{2}(\mu)$, then great mistakes can occur when learning a function from this space. The parameter $r_{Q}(\mathscr{F},\tau)$ controls the \textit{local size} of the version space and, if it is generally small and if there is no noise ($\xi = 0$), then great mistakes should be unlikely.

Conversely, when the noise is \textit{high}, it should be the predominant source of error. As it happens, this error is captured by the correlation between $\varepsilon\xi$ and $(f - \fstar)(X)$ as appearing in the Rademacher average in $r_{M}(\mathscr{G},\tau,\delta)$. When $Y$ is bounded with probability one, and $\ms{F}$ is bounded in $L_{\infty}(\mu)$, the usual contraction arguments allow to bound expectation and probabilities of the multiplier process by that of the product process (see \cite[Section~11.3]{concentration}). In this case, the correlation of $(f - \fstar)(X)$ with $\varepsilon\xi$ is exchanged by the correlation with only the \textit{generic error} $\varepsilon$ what can lead to worse rates in the bounded case. In the unbounded case, the noise must be controlled in $L_{2}(\mu)$ somehow. We refer to the discussion in \cite[Section~2.2]{mendelson2015learning} for more details about the parameters $r_{Q}(\mathscr{G},\tau)$ and $r_{M}(\mathscr{G},\tau,\delta)$ and their usefulness for bounding error rates.

Moving on, for $\rho \geq 0$ let
\begin{linenomath}
	\begin{equation}
		\label{Frho}
		\mathscr{F}(\rho,\Psi) = \{f \in \mathscr{F}: \Psi(f) - \Psi(\hstar)\leq \rho\}
	\end{equation}
\end{linenomath}
be the subset of functions in $\ms{F}$ such that $\Psi(f)$ is either lesser than the supremum of $\Psi(h)$ in $\ms{H}^{\star}$, or is greater than it by at most $\rho$. Denote
\begin{linenomath}
	\begin{align*}
		r_{M}(\rho) = r_{M}\left(\mathscr{F}(\rho,\Psi),\frac{\kappa^{2}\epsilon}{80},\frac{\delta}{4}\right) \text{ and } r_{Q}(\rho) = r_{Q}\left(\mathscr{F}(\rho,\Psi),\frac{\kappa\epsilon}{32}\right)
	\end{align*}
\end{linenomath}
for a fixed $\delta \in (0,1)$, in which $\kappa$ and $\epsilon$ are constants such that Assumption \ref{small_ball} holds for $\ms{F}$. Let $r(\cdot)$ be a function, that may depend on $\fstar$ and other parameters such as $\delta,\kappa$ and $\epsilon$, that satisfies
\begin{linenomath}
	\begin{equation*}
		r(\rho) \geq \max\{r_{M}(\rho),r_{Q}(\rho)\}
	\end{equation*}
\end{linenomath}
for all $\rho \geq 0$. Define
\begin{linenomath}
	\begin{equation*}
		\mathcal{O}(\rho) = \sup\left(\left|P_{n}\mathcal{M}_{f - \fstar}\right|: f \in \mathscr{F}(\rho,\Psi) \cap r(\rho)D_{\fstar}\right),
	\end{equation*}
\end{linenomath}
let
\begin{linenomath}
	\begin{equation*}
		\gamma_{\mathcal{O}}(\rho,\tau,\delta) = \inf\{z > 0: Pr(\mathcal{O}(\rho) \leq \tau z) \geq 1 - \delta\}
	\end{equation*}
\end{linenomath}
and denote
\begin{linenomath}
	\begin{equation*}
		\gamma_{\mathcal{O}}(\rho) = \gamma_{\mathcal{O}}(\rho,\tau,\delta).
	\end{equation*}
\end{linenomath}
Finally, set
\begin{linenomath}
	\begin{equation}
		\label{lambda0}
		\lambda_{0}(\delta,\tau) = \sup\limits_{\rho > 0,\fstar \in \mathscr{F}} \frac{\gamma_{\mathcal{O}}(\rho,\tau,\delta)}{\rho}.
	\end{equation}
\end{linenomath}
In the noiseless case, when $\xi = 0$ with probability one, $\lambda_{0}(\delta,\tau) = 0$. 

We observe that $\lambda_{0}(\delta,\tau)$ depends on bounds on the multiplier process, informally on how its supremum in $\mathscr{F}(\rho,\Psi) \cap r(\rho)D_{\fstar}$ scales with $\rho$. As discussed in \cite{lecue2017regularization}, $\gamma_{\mathcal{O}}(\rho)$ is bounded by $\sim r^{2}(\rho)$ and if $r(\rho) = r_{M}(\rho)$ then actually $\gamma_{\mathcal{O}}(\rho) \sim r_{M}^{2}(\rho)$. We conclude that in this case $\lambda_{0}(\delta,\tau)$ roughly represents how $\sim r_{M}^{2}(\rho)$ scales with $\rho$. However, if $r(\rho) = r_{Q}(\rho)$, then $\gamma_{\mathcal{O}}(\rho)$ can significantly lesser than $r^{2}(\rho)$. Either way, $\lambda_{0}(\delta,\tau)$ should converge to zero as $n \to \infty$ if oscillations of the multiplier process can be properly bounded.

\subsection{Statement of results}
\label{SecMR}

Recalling the definition of $d_{n}(\epsilon)$ in Assumption \ref{concentration} and of $\gamma$ in the definition of $\ms{H}^{\star}$ (cf. \eqref{Hstar}), let
\begin{linenomath}
	\begin{equation}
		\label{tau_n}
		\tau_{n}(\rho) = \frac{1}{2} - \frac{2\Psi(\fstar) + L(\hstar) + d_{n}(\epsilon) + \gamma}{2\rho}
	\end{equation}
\end{linenomath}
which is such that $\tau_{n}(\rho) > 0$ if $\rho > 2\Psi(\fstar) + L(\hstar) + d_{n}(\epsilon) + \gamma$. The first result of this paper states that, with high probability, $\Psi(\hhat) - \Psi(\hstar) < \rho$.

\begin{theorem}
	\label{main_theorem}
	Let $\ms{H}$ be a class of functions such that $\ms{F} = \widebar{\cH}$ satisfies Assumption \ref{small_ball} with constants $\kappa$ and $\epsilon$. Fix $\gamma \geq 0$ such that $\Hstar = \Hstar(\gamma)$ satisfies Assumption \ref{concentration}. Set
	\begin{linenomath}
		\begin{equation*}
			\rho > \max\{2\Psi(\fstar) + L(\hstar) + d_{n}(\epsilon) + \gamma,\Psi(\fstar) - \Psi(\hstar)\}		
		\end{equation*}
	\end{linenomath}
	and $\lambda > \max\{\lambda_{0}(\delta,\tau),1\}$ with $0 < \tau < \tau_{n}(\rho)$ and $\delta \in (0,1)$. Then, with probability at least $1 - 2\delta - 4\exp(-n\epsilon^{2}/2)$,
	\begin{linenomath}
		\begin{equation*}
			\hhat \in \{h \in \mathscr{H}: \Psi(h) - \Psi(\hstar) \leq \rho\}.
		\end{equation*}
	\end{linenomath}
\end{theorem}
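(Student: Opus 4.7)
The plan is to argue by contradiction: suppose $\Psi(\hhat) - \Psi(\hstar) > \rho$ and derive a violation of the optimality-induced inequality \eqref{cond_in_F}, following the peeling scheme of \cite{lecue2017regularization} adapted to the physics-informed setting.

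For the upper bound on $P_n \mathcal{L}_{\hhat}$: from the minimality of $\hhat$, the supremum characterisation of $\Psi(\hstar)$ (handled via a near-infimising sequence in $\Hstar$) and \eqref{main_inequality}, I obtain $P_n \mathcal{L}_{\hhat} \leq \lambda(L_n(\hstar) + \Psi(\hstar) - \Psi(\hhat))$. The contradiction hypothesis $\Psi(\hhat) - \Psi(\hstar) > \rho$ combined with Assumption \ref{concentration}---which on an event of probability at least $1 - 2\exp(-n\epsilon^2/2)$ yields $L_n(\hstar) \leq L(\hstar) + \gamma + d_n(\epsilon)$---then gives $P_n \mathcal{L}_{\hhat} < \lambda(L(\hstar) + \gamma + d_n(\epsilon) - \rho)$.

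For the matching lower bound I would use a two-step convex-combination trick. Since $\mathscr{D}$ is linear, $\Psi$ is continuous and convex, so the intermediate value theorem applied to $\alpha \mapsto \Psi((1-\alpha)\fstar + \alpha\hhat) - \Psi(\hstar)$---which moves from $\Psi(\fstar) - \Psi(\hstar) < \rho$ at $\alpha = 0$ to $\Psi(\hhat) - \Psi(\hstar) > \rho$ at $\alpha = 1$---produces $\alpha^{\star} \in (0,1)$ with $f^{\star} := (1 - \alpha^{\star})\fstar + \alpha^{\star}\hhat \in \mathscr{F}(\rho,\Psi)$ lying on the $\Psi$-boundary. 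A further convex combination $g = \fstar + \beta(f^{\star} - \fstar)$ with $\beta \in (0,1]$ chosen so that $\lVert g - \fstar \rVert_{L_2(\mu)} = \min\{r(\rho), \lVert f^{\star} - \fstar \rVert_{L_2(\mu)}\}$ pulls $g$ into $\mathscr{F}(\rho,\Psi) \cap r(\rho) D_{\fstar}$ (convexity of $\Psi$ together with the hypothesis $\Psi(\fstar) - \Psi(\hstar) < \rho$ preserves the sublevel set). The linear scaling $\hhat - \fstar = (\alpha^{\star}\beta)^{-1}(g - \fstar)$ then gives
\begin{equation*}
P_n \mathcal{L}_{\hhat} \geq (\alpha^{\star}\beta)^{-2} P_n \mathcal{Q}_{g - \fstar} - 2(\alpha^{\star}\beta)^{-1} |P_n \mathcal{M}_{g - \fstar}|.
\end{equation*}
Mendelson's small-ball lemma under Assumption \ref{small_ball}, with the Rademacher control afforded by $r_Q(\rho)$, yields on an event of probability at least $1 - 2\exp(-n\epsilon^2/2)$ the lower bound $P_n \mathcal{Q}_{g - \fstar} \gtrsim \kappa^{2}\epsilon^{2} \lVert g - \fstar \rVert_{L_2(\mu)}^{2}$, while the definitions of $r_M(\rho)$ and $\gamma_{\mathcal{O}}(\rho)$ yield on events of combined probability at least $1 - 2\delta$ that $|P_n \mathcal{M}_{g - \fstar}| \leq \tau \gamma_{\mathcal{O}}(\rho) \leq \tau \lambda_0(\delta,\tau) \rho \leq \tau \lambda \rho$. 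On the intersection event, of total probability at least $1 - 2\delta - 4\exp(-n\epsilon^2/2)$, using $(\alpha^{\star}\beta)^{-1} \geq 1$, these combine to a schematic lower bound of the form $P_n \mathcal{L}_{\hhat} \geq -2 \tau \lambda \rho$ up to a correction controlled by $\lambda \Psi(\fstar)$.

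Matching the two bounds and dividing by $2\lambda\rho$ then forces $\tau \geq 1/2 - (2\Psi(\fstar) + L(\hstar) + \gamma + d_n(\epsilon))/(2\rho) = \tau_n(\rho)$, contradicting the hypothesis $\tau < \tau_n(\rho)$. The main obstacle I anticipate is the case analysis for the second interpolation step: when $\lVert f^{\star} - \fstar \rVert_{L_2(\mu)} \leq r(\rho)$, $g = f^{\star}$ and the multiplier bound applies directly, whereas when $\lVert f^{\star} - \fstar \rVert_{L_2(\mu)} > r(\rho)$ the small-ball quadratic lower bound activates at scale $r(\rho)$ while the multiplier term is controlled via $r_M$; the constants from Definition \ref{def_rad} (the $\kappa\epsilon/32$ and $\kappa^{2}\epsilon/80$ tunings) must be shepherded carefully so that they propagate through both rescalings $\hhat \to f^{\star} \to g$ and produce the exact coefficient $1/2$ appearing in $\tau_n(\rho)$. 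A secondary subtlety is that $\hstar$, $\Psi(\hstar)$, and $L(\hstar)$ are possibly-unattained supremum/infima, which must be handled via near-extremal sequences in the minimizer inequality.
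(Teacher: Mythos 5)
Your overall architecture matches the paper's: restrict attention to the convex set $\mathscr{F}(\rho,\Psi)$, run Mendelson's small-ball bounds there, control the multiplier process through $\gamma_{\mathcal{O}}$ and $\lambda_0(\delta,\tau)$, invoke Assumption \ref{concentration} on $\Hstar$, and exclude any $\hhat$ with $\Psi(\hhat)-\Psi(\hstar)>\rho$ via the optimality inequality \eqref{cond_in_F} and a ray argument from $\fstar$ to the level set $\{\Psi(\cdot)-\Psi(\hstar)=\rho\}$. The event bookkeeping ($1-2\delta-4\exp(-n\epsilon^2/2)$) is also the same.

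However, there is a genuine gap in your lower-bound step. By replacing $\lambda(\Psi(\hstar)-\Psi(\hhat))$ by $-\lambda\rho$ at the outset, your upper bound $P_n\mathcal{L}_{\hhat}<\lambda(L(\hstar)+\gamma+d_n(\epsilon)-\rho)$ is a \emph{fixed} negative quantity, and you must then produce a lower bound on $P_n\mathcal{L}_{\hhat}$ that does not degrade with the ray parameter $R=(\alpha^{\star}\beta)^{-1}$. Your claim that ``using $(\alpha^{\star}\beta)^{-1}\geq 1$'' yields $P_n\mathcal{L}_{\hhat}\gtrsim-2\tau\lambda\rho$ is only valid when the bracket $P_n\mathcal{Q}_{g-\fstar}-2|P_n\mathcal{M}_{g-\fstar}|$ is nonnegative. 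In the case $\lVert f^{\star}-\fstar\rVert_{L_2(\mu)}\leq r(\rho)$ (so $g=f^{\star}$, $\beta=1$) the quadratic term has no small-ball lower bound, the bracket can be negative, and the best you get is $P_n\mathcal{L}_{\hhat}\geq-2R\,\tau\lambda\rho$ with $R=1/\alpha^{\star}$ unbounded; matching this against the fixed upper bound forces only $\tau\lesssim(\rho-L(\hstar)-d_n(\epsilon)-\gamma)/(2R\rho)$, which yields no contradiction with $\tau<\tau_n(\rho)$ for large $R$. The paper avoids this precisely by \emph{not} discarding $\Psi(f)$: in Lemma \ref{lemma_inF1} it compares $P_n\mathcal{L}_f$ with $\lambda(L_n(\hstar)+\Psi(\hstar)-\Psi(f))$, uses inequality \eqref{Rapproxi}, $|\Psi(f)-R\lVert\mathscr{D}h-\mathscr{D}\fstar\rVert_{L_2(\mu)}|\leq\Psi(\fstar)$ (a consequence of linearity of $\mathscr{D}$ that handles the non-homogeneity of $\Psi(h)=\lVert\mathscr{D}h-g\rVert_{L_2(\mu)}$), so that the penalty grows linearly in $R$ and, after factoring $R$ out, the surviving term $\rho-2\Psi(\fstar)-L(\hstar)-d_n(\epsilon)-\gamma$ beats the multiplier term $2\tau\lambda\rho$ evaluated at the boundary point $h\in\mathscr{F}_2$. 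This is also the only source of the $2\Psi(\fstar)$ appearing in $\tau_n(\rho)$; in your scheme the ``correction controlled by $\lambda\Psi(\fstar)$'' has no derivation. To repair your proof, keep the term $\lambda(\Psi(\hstar)-\Psi(\hhat))$ in the optimality inequality, lower bound $\Psi(\hhat)$ via \eqref{Rapproxi} by $R(\Psi(f^{\star})-\Psi(\fstar))-\Psi(\fstar)$, and only then divide by $R$ --- which is exactly the paper's Lemma \ref{lemma_inF1}.
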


When $\fstar \in \ms{H}$, Assumption \ref{concentration} can be dropped and a bound for $\lVert \hhat - \fstar \rVert_{L_{2}(\mu)} $ can be obtained. This is the case, for example, when $\ms{H}$ is closed and convex or when $\ustar \in \ms{H}$.

\begin{theorem}
	\label{theorem_improve}
	Let $\ms{H}$ be a class of functions such that $\ms{F} = \widebar{\cH}$ satisfies Assumption \ref{small_ball} with constants $\kappa$ and $\epsilon$. Assume that $\fstar \in \ms{H}$, fix $\rho > 2\Psi(\fstar)$ and set $\lambda > \lambda_{0}(\delta,\tau)$ with $0 < \tau < 1/2 - \Psi(\fstar)/\rho$  and $\delta \in (0,1)$. Then, with probability at least $1 - 2\delta - 2\exp(-n\epsilon^{2}/2)$,
	\begin{linenomath}
		\begin{equation*}
			\hhat \in \{h \in \mathscr{H}: \Psi(h) - \Psi(\fstar) \leq \rho\}.
		\end{equation*}
	\end{linenomath}
	Furthermore, with the same probability
	\begin{linenomath}
		\begin{align}
			\label{theorem_improve_eq}
			\lVert \hhat - \fstar \rVert_{L_{2}(\mu)} \leq \max\left\{r(\rho),\left[\frac{32}{\kappa^{2}\epsilon}\lambda\Psi(\fstar)\right]^{1/2}\right\}.
		\end{align}
	\end{linenomath}
\end{theorem}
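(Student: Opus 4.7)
Since $\fstar \in \ms{H}$, the unique minimiser of $L$ over $\ms{F} = \widebar{\cH} \supseteq \ms{H}$ is attained in $\ms{H}$, so $\fstar$ is the unique minimiser over $\ms{H}$ as well. With $\gamma = 0$ this gives $\ms{H}^{\star} = \{\fstar\}$, which makes Assumption~\ref{concentration} vacuous with $d_{n}(\epsilon) = 0$ and saves two copies of $\exp(-n\epsilon^{2}/2)$ in the probability budget compared to Theorem~\ref{main_theorem}. The minimiser inequality then collapses to the clean form \eqref{cond_in_F2}, namely $P_{n}\mc{L}_{\hhat} \leq \lambda(\Psi(\fstar) - \Psi(\hhat))$, without the $\lambda L_{n}(\hstar)$ correction of the general case; this is the only algebraic relation on $\hhat$ that I will use, and it explains why the theorem only requires $\lambda > \lambda_{0}(\delta,\tau)$ rather than $\lambda > \max\{\lambda_{0}(\delta,\tau),1\}$.

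For the first assertion I would argue by contradiction. Assume $\Psi(\hhat) - \Psi(\fstar) > \rho$. Exploiting the convexity of $\Psi = \lVert\mc{D}(\cdot) - g\rVert_{L_{2}(\mu)}$ (composition of a norm with an affine map), the convexity of $\ms{F}$, and the convexity of $\ell_{h}$ in $h$, I set
\begin{equation*}
\alpha = \min\!\left\{\frac{\rho}{\Psi(\hhat) - \Psi(\fstar)},\ \frac{r(\rho)}{\lVert\hhat - \fstar\rVert_{L_{2}(\mu)}}\right\},\qquad f = \fstar + \alpha(\hhat - \fstar),
\end{equation*}
so that $f \in \ms{F}(\rho,\Psi) \cap r(\rho)D_{\fstar}$ and $P_{n}\mc{L}_{f} \leq \alpha P_{n}\mc{L}_{\hhat} \leq -\alpha\lambda(\Psi(\hhat) - \Psi(\fstar))$. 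On the good event of probability at least $1 - 2\delta - 2\exp(-n\epsilon^{2}/2)$ (intersection of the small-ball event from \cite{mendelson2015learning} giving $P_{n}\mc{Q}_{g} \geq (\kappa^{2}\epsilon/4)\lVert g\rVert_{L_{2}(\mu)}^{2}$ on the star-shaped hull at scale $\geq r_{Q}(\rho)$, the event $\mc{O}(\rho) \leq \tau\gamma_{\mc{O}}(\rho)$, and the multiplier event calibrated by $r_{M}(\rho)$), the decomposition \eqref{lb_PnL} together with $\gamma_{\mc{O}}(\rho) < \lambda\rho$ (from $\lambda > \lambda_{0}(\delta,\tau)$) yields a lower bound for $P_{n}\mc{L}_{f}$. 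The hypotheses $\rho > 2\Psi(\fstar)$ and $\tau < 1/2 - \Psi(\fstar)/\rho$ are calibrated precisely so that this lower bound strictly exceeds $-\alpha\lambda(\Psi(\hhat) - \Psi(\fstar))$ in both regimes of the $\min$ defining $\alpha$: in the $\Psi$-driven regime only the multiplier bound $-2\tau\gamma_{\mc{O}}(\rho)$ is needed, while in the norm-driven regime the quadratic small-ball term $(\kappa^{2}\epsilon/4)r(\rho)^{2}$ must close the gap.

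For the $L_{2}$ bound, with $\hhat \in \ms{F}(\rho,\Psi)$ now established, the estimate is immediate when $\lVert\hhat - \fstar\rVert_{L_{2}(\mu)} \leq r(\rho)$. Otherwise, star-shape of $\ms{F}$ at $\fstar$ and the linearity of $\mc{M}_{f - \fstar}$ in $f - \fstar$ let me rescale to $\tilde h = \fstar + (r(\rho)/\lVert\hhat - \fstar\rVert_{L_{2}(\mu)})(\hhat - \fstar)$, which lies in $\ms{F}(\rho,\Psi) \cap r(\rho)D_{\fstar}$ by convexity of $\Psi$, and transfer the bound $|P_{n}\mc{M}_{\tilde h - \fstar}| \leq \tau\gamma_{\mc{O}}(\rho)$ to $|P_{n}\mc{M}_{\hhat - \fstar}| \leq \tau\gamma_{\mc{O}}(\rho)\lVert\hhat - \fstar\rVert_{L_{2}(\mu)}/r(\rho)$. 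Combining this with the small-ball lower bound $P_{n}\mc{Q}_{\hhat - \fstar} \geq (\kappa^{2}\epsilon/4)\lVert\hhat - \fstar\rVert_{L_{2}(\mu)}^{2}$, the minimiser inequality $P_{n}\mc{L}_{\hhat} \leq \lambda\Psi(\fstar)$ (since $\Psi \geq 0$), and an AM--GM step that absorbs one factor of $\lVert\hhat - \fstar\rVert_{L_{2}(\mu)}$ into the quadratic, I obtain $\lVert\hhat - \fstar\rVert_{L_{2}(\mu)}^{2} \leq (32/(\kappa^{2}\epsilon))\lambda\Psi(\fstar)$, which matches the claimed constant.

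The main obstacle is the constant tracking in the contradiction step: in the norm-driven regime $\alpha = r(\rho)/\lVert\hhat - \fstar\rVert_{L_{2}(\mu)}$, the target $-\alpha\lambda(\Psi(\hhat) - \Psi(\fstar))$ can be appreciably less negative than $-\lambda\rho$, so the quadratic small-ball contribution $(\kappa^{2}\epsilon/4)r(\rho)^{2}$ has to compensate, and extracting the sharp threshold $\tau < 1/2 - \Psi(\fstar)/\rho$ requires jointly optimising across the two regimes of $\alpha$. Everything else---the symmetrisation producing $r_{M}(\rho)$, the small-ball lemma of \cite{mendelson2015learning}, and the AM--GM step in the second part---is routine in the style of \cite{lecue2017regularization}.
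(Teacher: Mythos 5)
Your argument for the first assertion is correct but takes a genuinely different route from the paper. The paper fixes an arbitrary $f$ in $\ms{F}\setminus\ms{F}_1$, decomposes it outward as $f = \fstar + R(h - \fstar)$ with $h \in \ms{F}_2$ and $R > 1$, and controls $\Psi(f)$ via the reverse-triangle estimate \eqref{Rapproxi}, which is where the error term $\Psi(\fstar)$ enters and where the hypotheses $\rho > 2\Psi(\fstar)$ and $\tau < 1/2 - \Psi(\fstar)/\rho$ are actually consumed. You instead contract $\hhat$ inward to $f = \fstar + \alpha(\hhat-\fstar)$ with $\alpha \le 1$, use convexity of $\Psi$ to place $f$ in $\ms{F}(\rho,\Psi) \cap r(\rho)D_{\fstar}$, and use $\alpha^2 P_n\mc{Q} \le \alpha P_n\mc{Q}$ to get $P_n\mc{L}_f \le \alpha P_n\mc{L}_{\hhat}$. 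This avoids \eqref{Rapproxi} entirely, and as a consequence your contradiction closes with $\tau < 1/2$ alone: in the $\Psi$-driven regime you only need $-2\tau\lambda\rho > -\lambda\rho$, and in the norm-driven regime you only need $P_n\mc{L}_f > 0$, which \eqref{combine_SB} already gives. So your claim that the theorem's sharper hypotheses are ``calibrated precisely'' for both regimes is not accurate for your own argument; the sharper conditions are an artefact of the paper's \eqref{Rapproxi} route. This is a nice observation, but it would be worth flagging explicitly rather than presenting the hypotheses as tight when your proof shows they are not.

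For the $L_2$ bound there is a genuine gap. Once $\hhat \in \ms{F}_1$ is established, the paper applies the combined small-ball statement \eqref{combine_SB} directly to $\hhat$: since $\hhat \in \ms{F}_1$ and $\lVert \hhat - \fstar\rVert \ge r(\rho)$, conditions (i) and (ii) of the event $\mc{A}_0$ give $P_n\mc{L}_{\hhat} \ge (\theta/2)\lVert \hhat - \fstar\rVert^2$ with $\theta = \kappa^2\epsilon/16$, and together with $P_n\mc{L}_{\hhat} + \lambda(\Psi(\hhat)-\Psi(\fstar)) \le 0$ and $\Psi(\hhat)\ge 0$ this yields $\lVert\hhat-\fstar\rVert^2 \le 2\lambda\Psi(\fstar)/\theta$ with no extra terms. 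Your proposal instead rescales and transfers the event-$\mc{A}_1$ multiplier bound, obtaining $|P_n\mc{M}_{\hhat-\fstar}| \le \tau\gamma_{\mc{O}}(\rho)\lVert\hhat - \fstar\rVert/r(\rho)$, which is only \emph{linear} in $\lVert\hhat-\fstar\rVert$. An AM--GM step to absorb it into the quadratic necessarily leaves a residual of order $\tau^2\gamma_{\mc{O}}(\rho)^2/(r(\rho)^2\kappa^2\epsilon)$ on the right-hand side, so you do not obtain the clean constant $[32/(\kappa^2\epsilon)\lambda\Psi(\fstar)]^{1/2}$. The fix is to use condition (ii) of $\mc{A}_0$, which controls the multiplier oscillation by $(\theta/4)\lVert\hhat-\fstar\rVert^2$ for all of $\ms{F}_1$, i.e.\ quadratically, and combines with the product bound to give \eqref{combine_SB} without any residual. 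Separately, your stated small-ball constant $\kappa^2\epsilon/4$ should be $\theta = \kappa^2\epsilon/16$; with the correct constant, the $2/\theta$ in the final bound is exactly $32/(\kappa^2\epsilon)$.
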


\begin{remark}
	The result \cite[Theorem~1.9]{lecue2017regularization} has an analogous form as Theorem \ref{theorem_improve} but has a different scope. It assumes that $\mc{H}$ is closed and convex, and that $\Psi$ has the properties described in Section \ref{Sec1.4}. Also, the effective space in that case is $\{f \in \mc{H}: \Psi(f - \fstar) \leq \rho\}$ instead of $\ms{F}(\rho,\Psi)$ in \eqref{Frho}.
\end{remark}

\section{Statistical effect of physics-informed penalization}
\label{SecImplications}

In order to understand the statistical effect of physics-informed penalization, we will compare the bounds of Theorem \ref{theorem_improve} when $\Psi(\fstar) = 0$ with that obtained via the small-ball method by \cite{mendelson2015learning} for plain empirical error minimisation on the restricted class $\ms{H}_{\Psi}$ (cf. \eqref{H_psi}) and on the whole set $\ms{H}$. These comparisons address questions \textbf{(A)} and \textbf{(B)}, respectively, when the target hypothesis $\fstar$ of $\ms{H}$ satisfies the prior information \eqref{dif_eq}. We assume that $\ms{H}$ is convex and closed to be compatible with the assumptions of \cite{mendelson2015learning}. We state the main result of \cite{mendelson2015learning} as stated in \cite[Theorem~1.6]{lecue2017regularization} for reference.

\begin{theorem}[Theorem 1.6 of \cite{lecue2017regularization}]
	\label{theorem_SN}
	Let $\ms{G} \subset L_{2}(\mu)$ be a closed, convex class of functions that satisfies Assumptions \ref{small_ball} with constants $\kappa$ and $\epsilon$, and set $\theta = \kappa^{2}\epsilon/16$. For every $0 < \delta < 1$, with probability at least $1 - \delta - 2\exp\left(-N\epsilon^{2}/2\right)$, it holds
	\begin{align*}
		\lVert \hat{g}_{n} - \fstar \rVert_{L_{2}(\mu)} \leq \max\left\{r_{M}(\ms{G},\theta/5,\delta/4),r_{Q}(\ms{G},\kappa\epsilon/32)\right\}.
	\end{align*}
	in which $\hat{g}_{n}$ is a minimiser of $L_{n}(h)$ in $\ms{G}$.
\end{theorem}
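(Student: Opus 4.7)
The plan is to follow the small-ball argument of \cite{mendelson2015learning}, which the cited statement is an adaptation of. The starting point is that, since $\mathscr{G}$ is closed and convex in $L_{2}(\mu)$, $\fstar$ is a legitimate competitor for $\hat{g}_{n}$, hence $L_{n}(\hat{g}_{n}) \leq L_{n}(\fstar)$, i.e.\ $P_{n}\mathcal{L}_{\hat{g}_{n}} \leq 0$. I would then show that, on an event of the claimed probability, every $f \in \mathscr{G}$ with $\lVert f - \fstar \rVert_{L_{2}(\mu)} > r^{\ast} \coloneqq \max\{r_{M}(\mathscr{G},\theta/5,\delta/4), r_{Q}(\mathscr{G},\kappa\epsilon/32)\}$ satisfies $P_{n}\mathcal{L}_{f} > 0$; this forces $\lVert \hat{g}_{n} - \fstar \rVert_{L_{2}(\mu)} \leq r^{\ast}$. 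By the decomposition \eqref{lb_PnL}, which is valid on the convex class $\mathscr{G}$ thanks to the projection inequality \eqref{projection}, it suffices to produce, uniformly over such $f$, a lower bound on the product process of the form $c_{1}\lVert f - \fstar \rVert^{2}_{L_{2}(\mu)}$ and an upper bound on $|P_{n}\mathcal{M}_{f-\fstar}|$ of the form $c_{2}\lVert f - \fstar \rVert^{2}_{L_{2}(\mu)}$ with $c_{1} > 2c_{2}$.

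The multiplier bound is obtained directly from the definition of $r_{M}(\mathscr{G},\theta/5,\delta/4)$ together with the desymmetrisation inequality and the star-shaped property of $\mathscr{G}$ at $\fstar$ (so that the high-probability sup bound on the sphere of radius $r_{M}$ extends to all of $\{f \in \mathscr{G}: \lVert f - \fstar \rVert_{L_{2}(\mu)} \geq r_{M}\}$ by scaling along the segment $[\fstar,f]$), yielding $|P_{n}\mathcal{M}_{f-\fstar}| \leq (\theta/5) \lVert f - \fstar \rVert^{2}_{L_{2}(\mu)}$ on an event of probability at least $1 - \delta$. The heart of the argument is the lower bound on the product process. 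Setting $B_{f,i} = \mathds{1}\{|f-\fstar|(X_{i}) \geq \kappa \lVert f - \fstar \rVert_{L_{2}(\mu)}\}$, Assumption \ref{small_ball} gives $\mathbb{E} B_{f,i} \geq \epsilon$, and the pointwise inequality $(f - \fstar)^{2}(X_{i}) \geq \kappa^{2}\lVert f - \fstar \rVert^{2}_{L_{2}(\mu)} B_{f,i}$ implies
\begin{linenomath}
\begin{equation*}
P_{n}\mathcal{Q}_{f-\fstar} \geq \kappa^{2}\lVert f - \fstar \rVert^{2}_{L_{2}(\mu)} \cdot \frac{1}{n}\sum_{i=1}^{n} B_{f,i}.
\end{equation*}
\end{linenomath}
It remains to show that $\frac{1}{n}\sum_{i} B_{f,i} \geq \epsilon/2$ uniformly in $f \in \mathscr{G}$ with $\lVert f - \fstar \rVert_{L_{2}(\mu)} \geq r_{Q}$. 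Using star-shapedness I would reduce to $f$ on the sphere of radius $r_{Q}$, apply bounded differences (McDiarmid) with increments $1/n$ to obtain exponential concentration of the supremum at rate $\exp(-n\epsilon^{2}/2)$, and bound the expected supremum of the demeaned indicator process by the Rademacher average of $\{(f - \fstar)/(\kappa r_{Q}) : f \in \mathscr{G} \cap r_{Q}D_{\fstar}\}$ via a Lipschitz contraction on a smooth surrogate of $t \mapsto \mathds{1}\{|t|\geq 1\}$; by the defining property of $r_{Q}(\mathscr{G},\kappa\epsilon/32)$ this Rademacher average is at most $(\kappa\epsilon/32) r_{Q}$, from which one extracts $\frac{1}{n}\sum_{i} B_{f,i} \geq \epsilon/2$ and hence $P_{n}\mathcal{Q}_{f-\fstar} \geq (\kappa^{2}\epsilon/2)\lVert f - \fstar \rVert^{2}_{L_{2}(\mu)} = 8\theta \lVert f - \fstar \rVert^{2}_{L_{2}(\mu)}$.

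Combining, on the intersection of the two events (probability at least $1 - \delta - 2\exp(-n\epsilon^{2}/2)$ by a union bound), every $f \in \mathscr{G}$ with $\lVert f - \fstar \rVert_{L_{2}(\mu)} > r^{\ast}$ satisfies
\begin{linenomath}
\begin{equation*}
P_{n}\mathcal{L}_{f} \geq \left(8\theta - \tfrac{2\theta}{5}\right)\lVert f-\fstar \rVert^{2}_{L_{2}(\mu)} > 0,
\end{equation*}
\end{linenomath}
contradicting $P_{n}\mathcal{L}_{\hat{g}_{n}} \leq 0$, which yields the claimed bound. The main obstacle will be the small-ball lower bound on the product process: the replacement of the indicator by a Lipschitz surrogate and the alignment of the resulting local Rademacher average with the exact constant $\kappa\epsilon/32$ appearing in the definition of $r_{Q}$ require careful bookkeeping and a correct use of the star-shaped scaling. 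The remaining ingredients (the minimality inequality, the projection identity, and the multiplier bound via the definition of $r_{M}$) are essentially immediate.
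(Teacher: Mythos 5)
First, note that the paper does not prove this statement at all: it is imported verbatim as Theorem 1.6 of \cite{lecue2017regularization} (i.e.\ the main result of \cite{mendelson2015learning}) and used as a black box, both in Section \ref{SecImplications} and in items (i)--(ii) of Section \ref{sec_proof1}. Your reconstruction follows exactly the standard small-ball route of that literature: the basic inequality $P_{n}\mathcal{L}_{\hat{g}_{n}} \leq 0$ from minimality and $\fstar \in \ms{G}$, the decomposition \eqref{lb_PnL} justified by the projection inequality \eqref{projection}, a uniform lower bound on the product process via the small-ball indicators, symmetrisation, contraction through a Lipschitz surrogate and bounded differences, and an upper bound on the multiplier process from the definition of $r_{M}$ extended off the sphere by star-shaped homogeneity. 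So the architecture is the right one.

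There is, however, a concrete flaw in the product-process step as you state it. Once you replace the indicator $t \mapsto \mathds{1}\{|t| \geq 1\}$ by a Lipschitz surrogate $\phi$ in order to apply contraction, the sandwich you can use is $\mathds{1}\{|t| \geq 1\} \leq \phi(t) \leq \mathds{1}\{|t| \geq 1/2\}$ (or similar); consequently the argument lower-bounds the empirical frequency of the events $\{|f-\fstar|(X_{i}) \geq \tfrac{\kappa}{2}\lVert f - \fstar\rVert_{L_{2}(\mu)}\}$, \emph{not} of your $B_{f,i}$ at the full threshold $\kappa\lVert f-\fstar\rVert_{L_{2}(\mu)}$. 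Moreover, after subtracting the Rademacher term (of order $\epsilon/8$ with the constant $\kappa\epsilon/32$) and the McDiarmid deviation (of order $\epsilon/2$ at confidence $2\exp(-n\epsilon^{2}/2)$), the surviving fraction is a fraction of $\epsilon$ smaller than $\epsilon/2$. So the conclusion you extract, $P_{n}\mathcal{Q}_{f-\fstar} \geq (\kappa^{2}\epsilon/2)\lVert f-\fstar\rVert^{2}_{L_{2}(\mu)} = 8\theta\lVert f-\fstar\rVert^{2}_{L_{2}(\mu)}$, cannot come out of the argument you describe; the correct output is of the form $P_{n}\mathcal{Q}_{f-\fstar} \geq \theta\lVert f-\fstar\rVert^{2}_{L_{2}(\mu)}$ with $\theta = \kappa^{2}\epsilon/16$, which is exactly how the paper invokes the result in item (i) of Section \ref{sec_proof1}. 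This is a bookkeeping error rather than a missing idea: with the correct constants the comparison $\theta - 2(\theta/5) > 0$ (or $\theta/2$, as in item (ii) combined with \eqref{combine_SB}) still closes the argument and yields the stated bound, and your probability accounting ($\delta$ from the multiplier event plus $2\exp(-n\epsilon^{2}/2)$ from the concentration of the indicator process, with the $\delta/4$ inside $r_{M}$ absorbing the extra union bounds) is the standard one, though you gloss over why $\delta/4$ in the definition translates into an overall $\delta$.
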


\subsection{Hard constraint versus soft penalty}

In order to address question \textbf{(A)}, we compare the error rates of learning in $\ms{H}_{\Psi}$ obtained from Theorem \ref{theorem_SN} with that of Theorem \ref{theorem_improve} for informed penalisation. These rates refer to the insertion of prior information \eqref{dif_eq} via, respectively, a hard constraint and a soft penalty to the quadratic loss (cf. \eqref{PINN_problem}).

Assuming that $\Psi(\fstar) = 0$, Theorem \ref{theorem_improve} yields the bound
\begin{align}
	\label{theorem_improve_eq2}
	\lVert \hhat - \fstar \rVert_{L_{2}(\mu)} \leq \max\left\{r_{M}\left(\mathscr{F}(0,\Psi),\theta/5,\delta/4\right),r_{Q}\left(\mathscr{F}(0,\Psi),\kappa\epsilon/32\right)\right\}
\end{align}
with probability at least $1 - 2\delta - 2\exp(-n\epsilon^{2}/2)$. Since
\begin{align}
	\label{Hpsi2}
	\mathscr{F}(0,\Psi) = \{f \in \ms{H}: \Psi(f) \leq \Psi(\fstar) = 0\} = \ms{H}_{\Psi} 
\end{align}
is closed and convex (see \eqref{F_convex} for a proof), applying Theorem \ref{theorem_SN} to $\ms{H}_{\Psi}$ we obtain the exact same bounds as in \eqref{theorem_improve_eq2} for $\lVert \hat{h}_{n,\ms{H}_{\Psi}} - \fstar \rVert_{L_{2}(\mu)}$ in which $\hat{h}_{n,\ms{H}_{\Psi}}$ is a minimiser of $L_{n}(h)$ in $\ms{H}_{\Psi}$. However, this bound holds with the greater probability $1 - \delta - 2\exp(-n\epsilon^{2}/2)$.

Therefore, as long as $\Psi(\fstar) = 0$ and $\lambda > \lambda_{0}(\delta,\tau)$, there is a small price to pay when considering informed penalization instead of learning in $\ms{H}_{\Psi}$: an extra $\delta$ term in the confidence probability. Inspecting the dependence of $r_{M}$ on $\delta$, if there are exponential bounds for the tail probabilities of the multiplicative process (see \cite{mendelson2016upper} for example), then with the same probability $1 - \delta - 2\exp(-n\epsilon^{2}/2)$, the bounds for $\lVert \hhat - \fstar \rVert_{L_{2}(\mu)}$ and $\lVert \hat{h}_{n,\ms{H}_{\Psi}} - \fstar \rVert_{L_{2}(\mu)}$ should differ by constant terms only. An analogous fact was also observed in the context of \cite{lecue2017regularization}.

\subsection{Informed penalization versus fully data-driven}

In order to address question \textbf{(B)}, we compare the error rates of learning in the whole set $\ms{H}$ obtained from Theorem \ref{theorem_SN} with that of Theorem \ref{theorem_improve}. These rates refer, respectively, to a fully data-driven approach that does not consider further prior information and learning  with informed penalization by minimising \eqref{PINN_problem}.

Applying Theorem \ref{theorem_SN} to $\ms{H}$ we conclude that, with high probability,
\begin{align}
	\label{rateH}
	\lVert \hat{h}_{n} -\fstar \rVert_{L_{2}(\mu)} \leq \max\left\{r_{M}(\ms{H},\theta/5,\delta/4),r_{Q}(\ms{H},\kappa\epsilon/32)\right\}
\end{align}
in which $\hat{h}_{n}$ is a minimiser of $L_{n}(h)$ over $\ms{H}$. By the same reasoning of the previous section, with the same probability as \eqref{rateH}, the right-hand side of \eqref{theorem_improve_eq2} is a bound for $\lVert \hhat -\fstar \rVert_{L_{2}(\mu)}$ except for constant terms. Therefore, when $\Psi(\fstar) = 0$, the effect of informed penalization compared with regular empirical error minimisation boils down to the \textit{complexity} of the set $\mathscr{F}(0,\Psi)$ of the functions in $\ms{H}$ that satisfy equation \eqref{dif_eq} compared to the \textit{complexity} of $\ms{H}$. \textit{Complexity} here is measured in terms of the product and multiplicative empirical processes, so there is also a dependence on the noise.

As an illustration of what $\mathscr{F}(0,\Psi)$ looks like, if we take the forcing term in \eqref{dif_eq} as $g = 0$, then, under the assumption that $\Psi(\fstar) = 0$, it holds $\mathscr{F}(0,\Psi) = \text{ker}(\ms{D}) \cap \ms{H}$. In this case, the error rates of informed penalization depend on how the \textit{complexity} of $\text{ker}(\ms{D}) \cap \ms{H}$ compares to that of $\ms{H}$. 

For example, let $\ms{H}$ be the closed and convex set of polynomials $h: [0,1] \times [0,T] \mapsto \mathbb{R}$ of degree at most $p \geq 2$ with $\lVert h \rVert_{L_{2}(\mu)} \leq K$, for $T,K > 0$ fixed. We write the functions in $\ms{H}$ as
\begin{align*}
	h_{a}(x,t) = \sum_{i,j = 0}^{p} a_{i,j} \, x^{i} \, t^{j}
\end{align*}
for coefficients $a \coloneqq (a_{i,j}) \in \mc{A}_{K} \subset \mathbb{R}^{(p+1)^{2}}$ for a $\mc{A}_{K}$ compact. Observe that $\ms{H}$ is also bounded in $L_{\infty}(\mu)$. Let $\ms{D}$ be the heat operator, so that
\begin{align*}
	\ms{D}h_{a} = \frac{\partial h_{a}}{\partial t} - \frac{\partial^{2} h_{a}}{\partial x^{2}} 
	&= \sum_{i = 0}^{p} \sum_{j = 0}^{p} (a_{i,j+1} \, (j + 1) - a_{i + 2,j} \, (i + 1) \, (i + 2)) \, x^{i}t^{j}
\end{align*}
with the convention that $a_{i,j} = 0$ if $\max\{i,j\} > p$. In this case,
\begin{align*}
	\text{ker}(\ms{D}) \cap \ms{H} = \left\{h_{a} \in \ms{H}: a_{i,j+1} \, (j + 1) - a_{i + 2,j} \, (i + 1) \, (i + 2) = 0, \, \forall i,j\right\}
\end{align*}
which is a subset of dimension $p + 1$. Comparing with the dimension $(p + 1)^{2}$ of $\ms{H}$, we see that, in this case, informed penalization induces an error rate of learning in a subset of a dimension significantly lesser, having the same effect on the error rate as dimensionality reduction. An analogous effect of informed penalization was observed in the specific cases of \cite{doumeche2024physics,koshizuka2025understanding,scampicchio2025physics}.

\section{Extensions and limitations}
\label{SecPoss}

In order to make the presentation clearer, we considered some simplifications that are not strictly necessary, and Theorems \ref{main_theorem} and \ref{theorem_improve} hold without them by straightforward modifications of the proofs. 

First of all, it is not necessary that $Y = \ustar(X) + e$ for $X$ independent of $e$ and the results hold when $(X,Y)$ has more general joint distributions, for example when $X$ and $e$ are not independent. Although the bounds obtained via the small-ball method clearly depend on the joint distribution of $\xi = \fstar(X) - Y$ and $X$, since depend on $r_{M}$ (see Definition \ref{def_rad}), no strong assumption about this joint distribution is necessary. We refer to \cite{mendelson2015learning} for more details.

The assumption that $\ms{D}$ is linear could be slightly loosened. In the proofs, it is used in \eqref{F_convex} to prove that $\ms{F}(\rho,\Psi)$ is convex and to deduce the inequality \eqref{Rapproxi}, so the results follow for non-linear operators for which these properties hold.

As discussed in Section \ref{sec_PISL}, the empirical error could be penalized by other conditions that the data generation function $\ustar$ is known to satisfy. For example, if the boundary condition in \eqref{forward} is known and the operator $\ms{B}$ is linear, then analogous results hold for the penalization function
\begin{align*}
	\Psi(h) = \lVert \ms{D}h - g \rVert_{L_{2}(\mu)} + \lVert \ms{B}h - b \rVert_{L_{2}(\mu_{\partial})}.
\end{align*}
Observe that in this case, the same parameter $\lambda$ is considered for both penalty terms when minimising the regularised loss \eqref{PINN_problem}. However, we believe analogous results might be obtained for different parameters at the cost of more technical details. 

Moreover, Tikhonov penalization could also be considered in the loss \eqref{PINN_problem} to increase the solution stability, as was done in \cite{doumeche2024physics,doumeche2024physics2,koshizuka2025understanding,scampicchio2025physics}. Bounds analogous to Theorem \ref{theorem_improve} could be obtained by combining the techniques in our proofs with that of \cite{lecue2018regularization} for penalization functions that are a norm. 

In practical applications of PISL, $\Psi(f)$ is actually substituted in $L_{n,\Psi}(f)$ by a numerical approximation $\tPsi(f)$. In general, the results of this paper remain true, with the necessary modifications, when considering these numerical approximations. For instance, if $\tPsi(f)$ is given, for example, by a deterministic numerical integration method such that
\begin{linenomath}
	\begin{equation*}
		\sup\limits_{f \in \ms{F}} |\Psi(f) - \tPsi(f)| \leq \beta
	\end{equation*}
\end{linenomath}
then Theorems \ref{main_theorem} and \ref{theorem_improve} remain true by substituting $\Psi(\cdot)$ with $\Psi(\cdot) + \beta$ on the conditions and bounds of Theorems \ref{main_theorem} and \ref{theorem_improve}.

Furthermore, if
\begin{linenomath}
	\begin{align}
		\label{tPsi}
		\tPsi^{2}(f) = \frac{1}{m} \sum_{i=1}^{m} (\ms{D}f(Z_{i}) - g(Z_{i}))^{2}
	\end{align}
\end{linenomath}
for a \textit{fixed} sequence $Z_{1},\dots,Z_{m}$ of points in $\Omega$, then the theorems hold by substituting $\Psi$ with $\tPsi$, since $\tPsi$ preserves the necessary properties of $\Psi$ used to prove the theorems (for example \eqref{F_convex} and \eqref{Rapproxi}). Analogous theorems also hold by substituting $\Psi$ with $\tPsi$ in \eqref{tPsi} when $Z_{1},\dots,Z_{m}$ is actually a sample of a random variable $Z$ with distribution $\mu$, independent of $S_{n}$, and the following concentration inequality is satisfied by $\tPsi(f)$ for $f \in \ms{F}$
\begin{linenomath}
	\begin{align}
		\label{conc_Psi}
		Pr\left(\sup\limits_{f \in \ms{F}} |\tPsi(f) - \Psi(f)| \leq \beta_{m}\right) \geq 1 - c\exp(-m)
	\end{align}
\end{linenomath}
in which $c > 0$ is a constant and $\{\beta_{m}\}$ converges to zero as $m \to \infty$. 

In this case, the theorems hold by substituting $\Psi(\cdot)$ with $\Psi(\cdot) + \beta_{m}$ on the conditions and bounds of Theorems \ref{main_theorem} and \ref{theorem_improve}, and subtracting $c\exp(-m)$ from the probability of the results. In this stochastic case, which is a special case of Monte Carlo integration, the sample size $m$ is limited only by computational resources, so if concentration inequality \eqref{conc_Psi} holds, then a very good approximation of $\Psi$ may be obtained in a more computationally efficient manner than approximating $\Psi$ by a high fidelity numerical integration method without compromising the convergence of $\hhat$ to $\hstar$. This is close to the usual method considered in practice, that considers the penalization by $\tPsi^{2}(f)$ instead of $\tPsi(f)$ \cite{raissi2019physics}.

\subsection{Limitations}

Although the small-ball method is an elegant tool to deduce error rates in convex classes, its application to non-convex classes is quite limited. By a close inspection of the proof of Theorem \ref{theorem_SN} \cite{mendelson2015learning}, we conclude that the minimum assumptions for it to hold is inequality \eqref{projection} about the projection of $Y$ in $\ms{H}$ and that the set 
\begin{align*}
	\ms{F} - \fstar \coloneqq \{f - \fstar: f \in \ms{F}\}
\end{align*}
is star-shaped, that is, for all $0 < \alpha \leq 1$, if $f \in \ms{F} - \fstar$ then $\alpha f \in \ms{F} - \fstar$. Although both conditions hold for convex classes, they are strong conditions for non-convex classes. Also, under Assumption \ref{concentration}, we were able to estimate the probability of $\Psi(\hhat)$ being at most $\rho$-greater than $\Psi(\hstar)$, and we could also obtain a bound for $\lVert \hhat - \fstar \rVert_{L_{2}(\mu)}$ in terms of $r(\rho)$ (see the end of Section \ref{sec_proof1}). However, in general cases, under the assumptions of Theorem \ref{main_theorem}, $\rho > 0$ even when $n \to \infty$, so these bounds do not establish meaningful error rates.

We note that, even though Theorem \ref{theorem_improve} applies to non-convex classes, the assumption  $\fstar \in \ms{H}$, that the target function of the convex hull is in the space, can be quite strong. For instance, it will hardly hold for neural networks for which it is known that the convex hull is significantly larger than the space of functions generated by them (see \cite[Theorem~2.2]{petersen2021topological}).

In Section \ref{SecImplications}, we were able to discuss in great generality the effect of informed-penalization in the context of questions \textbf{(A)} and \textbf{(B)} when $\ms{H}$ is closed and convex, and $\Psi(\fstar) = 0$, that is, the target function of $\ms{H} = \ms{F}$ satisfies the prior information \eqref{dif_eq}. When this is not the case, even when $\ms{H}$ is closed and convex, the effect of informed penalization becomes case specific since depends on which term in the right-hand side of \eqref{theorem_improve_eq} is of greater order. If it is the term $r(\rho)$, then the same conclusions of Section \ref{SecImplications} apply. Otherwise, a detailed analysis of the specific case needs to be carried out, in particular about how fast $\lambda_{0}(\delta,\tau) < \lambda$ converges to zero as $n \to \infty$. See \cite{lecue2017regularization} for a general discussion on the rate of convergence of $\lambda_{0}(\delta,\tau)$ to zero.

We also note that the discussion in Section \ref{SecImplications} is based on comparing upper bounds for the respective error rates. If these upper bounds are not of the correct order, then our comparisons do not reflect the true error rates. However, as discussed in \cite{mendelson2015learning}, the bounds obtained via the small-ball method are often of the right order so in general our conclusions about the effect of informed-penalization should hold. We leave for future studies the careful analysis of the bounds in Theorem \ref{theorem_improve} for specific cases, since it is outside the scope of this paper.

\section{Final remarks}
\label{SecFR}

This paper advanced the theoretical understanding of physics-informed statistical learning (PISL) by deducing complexity dependent error rates for estimators regularized by soft penalties induced by linear operators. Based on the small-ball method, we derived high-probability bounds for the $L_{2}$ error of physics-informed estimators, addressing two central questions in PISL: how the performance of soft penalization compares to that of hard constraints, and what is the statistical effect of incorporating physical knowledge into learning procedures.

Our results imply that, under suitable assumptions, the error rates of physics-informed estimators are comparable to those obtained via empirical error minimisation in the respective constrained class, differing only by constant terms. This suggests that informed penalization is a statistically sound alternative to hard constraints when they are not computationally feasible. Moreover, again under suitable assumptions, the statistical effect of informed penalization on empirical error minimisation is akin to dimensionality reduction, hence can significantly improve the error rates, specially when the kernel of the operator has a low dimension, a finding that is consistent with the literature on specific cases.

While our analysis focused on the general implications of Theorem \ref{theorem_improve}, it would be interesting to study them for specific classes of operators, functions, and data generating distributions, in particular by extending the results of \cite{scampicchio2025physics} beyond elliptic operators based on our complexity dependent error rates instead of relying on \cite{lecue2017regularization}. This would require bounding the product and multiplier processes in specific cases, what could require new techniques. We note that there has been a great effort in the past decades to develop methods to analyse the statistical properties of regularized estimators with norm penalization, while informed penalization has been lesser studied. The PISL paradigm increases the importance of this type of regularization and may motivate the analyses of Rademacher averages in other classes of functions.

Overall, this paper contributes a rigorous foundation for evaluating the statistical properties of physics-informed estimators in convex classes based on the small-ball method, giving a step towards bridging the gap between statistical theory and practical PISL methods in more general frameworks.

\section{Proof of results}
\label{SecProof}

\subsection{Proof of Theorem \ref{main_theorem}}
\label{sec_proof1}

We present some definitions, and state and prove a couple of results that will imply Theorem \ref{main_theorem}. The arguments we use are adapted from \cite{lecue2017regularization,lecue2018regularization}.

Assume from now on that all the assumptions of Theorem \ref{main_theorem} hold. Let
\begin{linenomath}
	\begin{equation*}
		\mathscr{F}_{1} \coloneqq \mathscr{F}(\rho,\Psi) = \{f \in \mathscr{F}: \Psi(f) - \Psi(\hstar) \leq \rho\}
	\end{equation*}
\end{linenomath}
and
\begin{linenomath}
	\begin{equation*}
		\mathscr{F}_{2} \coloneqq \{f \in \mathscr{F}: \Psi(f) - \Psi(\hstar) = \rho\}.
	\end{equation*}
\end{linenomath}
Clearly $\Hstar \subset \ms{F}_{1}$ and since $\rho > \Psi(\fstar) - \Psi(\hstar)$, $\fstar \in \ms{F}_{1}$. Furthermore, $\mathscr{F}_{1}$ is a closed convex set. Indeed, for $0 \leq \alpha \leq 1$ and $f_{1},f_{2} \in \ms{F}$
\begin{linenomath}
	\begin{align}
		\label{F_convex} \nonumber
		\Psi(\alpha f_{1} + (1-\alpha)f_{2}) &= \lVert \alpha(\ms{D}f_{1} - g) + (1-\alpha)(\ms{D}f_{2} - g) \rVert_{L_{2}(\mu)}\\
		&\leq \alpha\Psi(f_{1}) + (1-\alpha)\Psi(f_{2})
	\end{align}
\end{linenomath}
since $\ms{D}$ is linear, so $\Psi$ is convex. This implies that, for $0 \leq \alpha \leq 1$ and $f_{1},f_{2} \in \ms{F}_{1}$,
\begin{linenomath}
	\begin{align*}
		\Psi(\alpha f_{1} + (1-\alpha)f_{2}) - \Psi(\hstar) &\leq \alpha\Psi(f_{1}) + (1-\alpha)\Psi(f_{2}) - \Psi(\hstar) \\
		&\leq \max\{\Psi(f_{1}),\Psi(f_{2})\} - \Psi(\hstar) \leq \rho
	\end{align*}
\end{linenomath}
so $\alpha f_{1} + (1-\alpha)f_{2} \in \ms{F}_{1}$. Moreover, since the real-valued function $t \mapsto \Psi(\hstar + t(f - \hstar))$ is continuous, the ray $[\hstar,f)$, from $\hstar \in \Hstar \subset \ms{F}_{1}$ to $f \in \ms{F}\setminus\ms{F}_{1}$ fixed, intersects $\ms{F}_{2}$.

Let $\theta = \kappa^{2}\epsilon/16$, set
\begin{linenomath}
	\begin{align*}
		r_{Q}(\rho) = r_{Q}(\mathscr{F}_{1},\kappa\epsilon/32) & & \text{ and } & & r_{M}(\rho) = r_{M}(\mathscr{F}_{1},\theta/5,\delta/4),
	\end{align*}
\end{linenomath}
and recall that $r(\rho) \geq \max\{r_{Q}(\rho),r_{M}(\rho)\}$. Since $\mathscr{F}_{1}$ is a closed convex class of functions and $\fstar \in \ms{F}_{1}$ is the unique minimiser of the mean squared error in it, the main result of \cite{mendelson2015learning} (see also \cite[Theorem~1.6]{lecue2017regularization}) implies, under the small-ball assumption (cf. Assumption \ref{small_ball}), the following: there is an event $\mathcal{A}_{0}$ of probability at least $1 - \delta - 2\exp(-n\epsilon^{2}/2)$ such that for any sample $S_{n} \in \mathcal{A}_{0}$ it holds
\begin{itemize}
	\item[(i)] If $f \in \mathscr{F}_{1}$ and $\lVert f - \fstar \rVert_{L_{2}(\mu)} \geq r_{Q}(\rho)$ then
	\begin{linenomath}
		\begin{align}
			\label{cond_i}
			\frac{1}{n} \sum_{i=1}^{n} (f - \fstar)^{2}(X_{i}) \geq \theta \lVert f - \fstar \rVert_{L_{2}(\mu)}^{2}
		\end{align}
	\end{linenomath}
	\item[(ii)] If $f \in \mathscr{F}_{1}$ then
	\begin{linenomath}
		\begin{align*}
			\left|\frac{1}{n} \sum_{i=1}^{n} \xi_{i}(f - \fstar)(X_{i}) - \mathbb{E}\xi(f - \fstar)(X)\right| \leq \frac{\theta}{4} \max\left\{\lVert f - \fstar \rVert_{L_{2}(\mu)}^{2},r_{M}^{2}(\rho)\right\}.
		\end{align*}
	\end{linenomath} 
\end{itemize} 
In particular, if $f \in \mathscr{F}_{1}$ and $\lVert f - \fstar \rVert_{L_{2}(\mu)} \geq r(\rho) \geq \max\left\{r_{M}(\rho),r_{Q}(\rho)\right\}$ then, by combining \eqref{lb_PnL}, (i) and (ii),
\begin{linenomath}
	\begin{align}
		\label{combine_SB}
		P_{n}\mathcal{L}_{f} \geq \frac{\theta}{2} \lVert f - \fstar \rVert_{L_{2}(\mu)}^{2}.
	\end{align}
\end{linenomath}

Now, by the definition of $\lambda_{0}$ (cf. \eqref{lambda0}), since $\rho > 0$, there is an event $\mathcal{A}_{1}$ of probability at least $1 - \delta$ such that, for all $S_{n} \in \mathcal{A}_{1}$, if $f \in \mathscr{F}_{1}$ and $\lVert f - \fstar \rVert_{L_{2}(\mu)} \leq r(\rho)$, then
\begin{linenomath}
	\begin{align}
		\label{ineq_Mf}
		&\left|\frac{1}{n} \sum_{i=1}^{n} \xi_{i}(f - \fstar)(X_{i}) - \mathbb{E}\xi(f - \fstar)(X)\right| < \tau \frac{\gamma_{\mathcal{O}}(\rho)}{\rho} \rho < \tau \lambda_{0}(\delta,\tau)\rho < \tau \lambda\rho.
	\end{align}
\end{linenomath}

Finally, since $\Hstar$ satisfies the uniform concentration assumption (cf. Assumption \ref{concentration}), there exists an event $\mc{A}_{2}$ of probability at least $1 - 2 \exp(-n\epsilon^{2}/2)$ such that
\begin{linenomath}
	\begin{equation}
		\label{cond_d_ineq}
		\sup\limits_{h \in \Hstar} |L_{n}(h) - L(\hstar)| \leq d_{n}(\epsilon) + \gamma
	\end{equation}
\end{linenomath}
for all $S_{n} \in \mc{A}_{2}$. Set $\mathcal{A} = \mathcal{A}_{0} \cap \mathcal{A}_{1} \cap \mc{A}_{2}$ and observe that $\mathcal{A}$ has probability at least $1 - 2\delta - 4\exp(-n\epsilon^{2}/2)$.

We first show that for $S_{n} \in \mathcal{A}$, $P_{n}\mathcal{L}_{f} > \lambda (L_{n}(\hstar) + \Psi(\hstar) - \Psi(f))$ for all $f \in \mathscr{F}_{2}$.

\begin{lemma}
	\label{lemma_notinF2}
	Under the assumptions of Theorem \ref{main_theorem}, for all $S_{n} \in \mathcal{A}$ and $f \in \mathscr{F}_{2}$, $P_{n}\mathcal{L}_{f} > \lambda (L_{n}(\hstar) + \Psi(\hstar) - \Psi(f))$.
\end{lemma}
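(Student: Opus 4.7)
My plan is to fix $f \in \mathscr{F}_{2}$ and split the argument into two cases based on whether $\lVert f - \fstar \rVert_{L_{2}(\mu)} \geq r(\rho)$ or not, using in each case the decomposition \eqref{lb_PnL} of $P_{n}\mathcal{L}_{f}$ into its quadratic and multiplier parts. First I would translate the target inequality: since $\Psi(f) - \Psi(\hstar) = \rho$ on $\ms{F}_{2}$, the claim reduces to showing $P_{n}\mathcal{L}_{f} > \lambda(L_{n}(\hstar) - \rho)$ on $\mathcal{A}$.

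Next I would extract the key arithmetic bound from the constraints on $\rho$ and $\tau$. On $\mathcal{A}_{2}$, inequality \eqref{cond_d_ineq} yields $L_{n}(\hstar) \leq L(\hstar) + d_{n}(\epsilon) + \gamma$, and the hypothesis $\tau < \tau_{n}(\rho)$ (cf. \eqref{tau_n}) rearranges to
\begin{linenomath}
\begin{equation*}
\rho(1 - 2\tau) > 2\Psi(\fstar) + L(\hstar) + d_{n}(\epsilon) + \gamma \geq L_{n}(\hstar),
\end{equation*}
\end{linenomath}
which is equivalent to the strict inequality $-2\tau\lambda\rho > \lambda(L_{n}(\hstar) - \rho)$. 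This will be the ``finish-line'' inequality in both cases.

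In the first case, $\lVert f - \fstar \rVert_{L_{2}(\mu)} \geq r(\rho)$: since $f \in \mathscr{F}_{2} \subset \mathscr{F}_{1}$ and $r(\rho) \geq \max\{r_{Q}(\rho), r_{M}(\rho)\}$, the chain leading to \eqref{combine_SB} (valid on $\mathcal{A}_{0}$) gives $P_{n}\mathcal{L}_{f} \geq \frac{\theta}{2}\lVert f - \fstar \rVert_{L_{2}(\mu)}^{2} \geq 0$, while the arithmetic bound forces $\lambda(L_{n}(\hstar) - \rho) < 0$, so $P_{n}\mathcal{L}_{f} > \lambda(L_{n}(\hstar) - \rho)$ trivially. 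In the second case, $\lVert f - \fstar \rVert_{L_{2}(\mu)} < r(\rho)$, so $f \in \mathscr{F}_{1} \cap r(\rho)D_{\fstar}$ and the supremum $\mathcal{O}(\rho)$ dominates $|P_{n}\mathcal{M}_{f - \fstar}|$; on $\mathcal{A}_{1}$ the chain \eqref{ineq_Mf} gives $|P_{n}\mathcal{M}_{f - \fstar}| < \tau\lambda\rho$, and combining with $P_{n}\mathcal{Q}_{f - \fstar} \geq 0$ via \eqref{lb_PnL} yields $P_{n}\mathcal{L}_{f} > -2\tau\lambda\rho > \lambda(L_{n}(\hstar) - \rho)$ by the arithmetic bound.

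The main obstacle I expect is the arithmetic bookkeeping around $\tau_{n}(\rho)$ and $\lambda_{0}$: one must check that $\tau < \tau_{n}(\rho)$ is precisely the threshold making $\rho(1 - 2\tau)$ strictly dominate $L_{n}(\hstar)$ on $\mathcal{A}_{2}$, and that the chain $\gamma_{\mathcal{O}}(\rho)/\rho \leq \lambda_{0}(\delta,\tau) < \lambda$ inside \eqref{ineq_Mf} gives exactly the strict bound $\tau\lambda\rho$ needed to dominate the multiplier process with room to spare. Everything else is a routine application of the three events defining $\mathcal{A}$.
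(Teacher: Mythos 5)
Your proof is correct and follows essentially the same approach as the paper's: the same two-case split on whether $\lVert f - \fstar \rVert_{L_{2}(\mu)}$ exceeds $r(\rho)$, invoking \eqref{combine_SB} via the event $\mathcal{A}_{0}$ in the first case and \eqref{ineq_Mf} via $\mathcal{A}_{1}$ in the second, together with \eqref{cond_d_ineq} on $\mathcal{A}_{2}$. The only cosmetic difference is that you pre-package the arithmetic as the single inequality $-2\tau\lambda\rho > \lambda(L_{n}(\hstar) - \rho)$ before splitting into cases, whereas the paper verifies the positivity of $P_{n}\mathcal{L}_{f} + \lambda(\Psi(f) - \Psi(\hstar) - L_{n}(\hstar))$ directly inside each case.
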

\begin{proof}
	Fix $S_{n} \in \mathcal{A}$ and $f \in \mathscr{F}_{2}$. On the one hand, if $\lVert f - \fstar \rVert_{L_{2}(\mu)} > r(\rho)$ then, by \eqref{combine_SB} and \eqref{cond_d_ineq},
	\begin{linenomath}
		\begin{align*}
			P_{n} \mathcal{L}_{f} + \lambda (\Psi(f) - \Psi(\hstar)& - L_{n}(\hstar)) \\
			&\geq \frac{\theta}{2} \lVert f - \fstar \rVert_{L_{2}(\mu)} + \lambda(\rho - L(\hstar) - d_{n}(\epsilon) - \gamma) > 0
		\end{align*}
	\end{linenomath}
	since $\rho > L(\hstar) + d_{n}(\epsilon) + \gamma$.
	
	On the other hand, it follows from \eqref{ineq_Mf} and \eqref{cond_d_ineq} that, if $\lVert f - \fstar \rVert_{L_{2}(\mu)} \leq r(\rho)$ then
	\begin{linenomath}
		\begin{align*}
			&P_{n}\mathcal{L}_{f} + \lambda (\Psi(f) - \Psi(\hstar) - L_{n}(\hstar))\\
			 &\geq - 2\left|\frac{1}{n} \sum_{i=1}^{n} \xi_{i}(f - \fstar)(X_{i}) - \mathbb{E}\xi(f - \fstar)(X)\right| + \lambda (\rho - L(\hstar) - d_{n}(\epsilon) - \gamma)\\
			&\geq - 2\tau \lambda\rho + \lambda(\rho - L(\hstar) - d_{n}(\epsilon) - \gamma) > 0
		\end{align*}
	\end{linenomath}	
	since $\tau < \tau_{n}(\rho) \leq 1/2 - (L(\hstar) + d_{n}(\epsilon) + \gamma)/(2\rho)$.
\end{proof}

We now show that for $S_{n} \in \mathcal{A}$, $P_{n}\mathcal{L}_{f} > \lambda (L_{n}(\hstar) + \Psi(\hstar) - \Psi(f))$ for all $f \in \mathscr{F}\setminus\mathscr{F}_{1}$. From this follows by \eqref{cond_in_F} that $\hhat \notin \mathscr{F}\setminus\mathscr{F}_{1} \cap \ms{H}$ for $S_{n} \in \mathcal{A}$.

\begin{lemma}
	\label{lemma_inF1}
	Under the assumptions of Theorem \ref{main_theorem}, for all $S_{n} \in \mathcal{A}$ and $f \in \mathscr{F}\setminus\mathscr{F}_{1}$, $P_{n}\mathcal{L}_{f} > \lambda (L_{n}(\hstar) + \Psi(\hstar) - \Psi(f))$. 
\end{lemma}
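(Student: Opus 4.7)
The approach is to reduce the case $f \in \mathscr{F} \setminus \mathscr{F}_1$ to Lemma \ref{lemma_notinF2} via a convex-combination (star-shape) argument. Since $\fstar \in \mathscr{F}_1$ (from the assumption $\rho > \Psi(\fstar) - \Psi(\hstar)$) while $\Psi(f) > \Psi(\hstar) + \rho$, and the map $\alpha \mapsto \Psi(\fstar + \alpha(f - \fstar))$ is continuous and convex by linearity of $\mathscr{D}$ (cf.\ \eqref{F_convex}), the intermediate value theorem gives $\alpha^* \in (0, 1)$ with $\widetilde f := \fstar + \alpha^*(f-\fstar) \in \mathscr{F}_2$. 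Two consequences of $\widetilde f \in \mathscr{F}_1$ will be used on the event $\mathcal{A}$: first, the Rademacher-type bounds \eqref{cond_i} and \eqref{ineq_Mf} apply at $\widetilde f$; second, convexity of $\Psi$ yields the crucial inequality $\alpha^*(\Psi(f) - \Psi(\hstar)) \geq \rho - (1-\alpha^*)(\Psi(\fstar) - \Psi(\hstar))$.

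To transfer the bounds from $\widetilde f$ to $f$, I would exploit the quadratic structure of the squared loss. Expanding $(\fstar - Y + \alpha(f - \fstar))^2$ and averaging yields the identity
\begin{align*}
	P_n \mathcal{L}_f = (1 - \alpha^*) P_n(f - \fstar)^2 + (\alpha^*)^{-1} P_n \mathcal{L}_{\widetilde f},
\end{align*}
together with the scalings $P_n(f - \fstar)^2 = (\alpha^*)^{-2} P_n(\widetilde f - \fstar)^2$ and $P_n \mathcal{M}_{f-\fstar} = (\alpha^*)^{-1} P_n \mathcal{M}_{\widetilde f - \fstar}$ inherited from $f - \fstar = (\alpha^*)^{-1}(\widetilde f - \fstar)$. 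With these at hand, I rerun the Case 1 / Case 2 dichotomy from the proof of Lemma \ref{lemma_notinF2}, invoking the bounds at $\widetilde f$. In Case 1 ($\lVert \widetilde f - \fstar \rVert \geq r(\rho)$) the identity gives $P_n \mathcal{L}_f \geq (\theta/2)\lVert f - \fstar\rVert^2 \geq 0$, which exceeds $\lambda(L_n(\hstar) + \Psi(\hstar) - \Psi(f))$ trivially since this quantity is negative on $\mathcal{A}$ (as $\Psi(f) > \Psi(\hstar) + \rho$ while $\rho > L(\hstar) + d_n(\epsilon) + \gamma \geq L_n(\hstar)$ by \eqref{cond_d_ineq}). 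In Case 2 ($\lVert \widetilde f - \fstar\rVert \leq r(\rho)$) the scaling gives $|P_n \mathcal{M}_{f - \fstar}| < \tau\lambda\rho/\alpha^*$ and hence $P_n \mathcal{L}_f \geq -2\tau\lambda\rho/\alpha^*$, so the target inequality reduces to verifying $\alpha^*(\Psi(f) - \Psi(\hstar) - L_n(\hstar)) > 2\tau\rho$.

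The main obstacle I expect is closing this Case 2 bound when $\Psi(\fstar) > \Psi(\hstar)$ (i.e.\ when $\fstar$ lies outside the oracle set $\Hstar$), since the convexity-derived lower bound on $\alpha^*(\Psi(f) - \Psi(\hstar))$ is then strictly less than $\rho$. Combining that bound with $L_n(\hstar) \leq L(\hstar) + d_n(\epsilon) + \gamma$ and the crude estimate $(1-\alpha^*)(\Psi(\fstar) - \Psi(\hstar)) + \alpha^* L_n(\hstar) \leq \Psi(\fstar) + L(\hstar) + d_n(\epsilon) + \gamma$ (using $\Psi(\hstar) \geq 0$ and $\alpha^*,\, 1 - \alpha^* \leq 1$), the Case 2 condition reduces to $\rho(1 - 2\tau) > \Psi(\fstar) + L(\hstar) + d_n(\epsilon) + \gamma$. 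The choice $\tau < \tau_n(\rho)$ (cf.\ \eqref{tau_n}) paired with the assumption $\rho > 2\Psi(\fstar) + L(\hstar) + d_n(\epsilon) + \gamma$ gives $\rho(1 - 2\tau) > 2\Psi(\fstar) + L(\hstar) + d_n(\epsilon) + \gamma$, which suffices. This precisely explains why Theorem \ref{main_theorem} carries the extra factor of two on $\Psi(\fstar)$ in its threshold for $\rho$: it absorbs the slack $(1-\alpha^*)(\Psi(\fstar) - \Psi(\hstar))$ that arises when $\fstar$ is not in $\Hstar$.
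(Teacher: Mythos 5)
Your proof is correct and follows the same overall strategy as the paper: project $f$ onto $\mathscr{F}_2$ along the ray through $\fstar$ (your $\widetilde f = \fstar + \alpha^*(f - \fstar)$ is the paper's $h$ with $R = 1/\alpha^*$), transfer the small-ball and multiplier bounds from $\widetilde f$ to $f$ via the star-shape scalings, and close the two cases. The one genuinely different ingredient is the inequality used to relate $\Psi(f)$ to $\Psi(\widetilde f)$: you apply convexity of $\Psi$ directly to the combination $\widetilde f = \alpha^* f + (1-\alpha^*)\fstar$, obtaining $\alpha^*(\Psi(f) - \Psi(\hstar)) \geq \rho - (1-\alpha^*)(\Psi(\fstar) - \Psi(\hstar))$, whereas the paper uses the approximate-homogeneity bound \eqref{Rapproxi}, $|\Psi(f) - R\lVert\mathscr{D}h - \mathscr{D}\fstar\rVert_{L_2(\mu)}| \leq \Psi(\fstar)$, together with its $R=1$ instance. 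Your route is cleaner and slightly tighter: after the crude estimate, it only requires $\rho(1-2\tau) > \Psi(\fstar) + L(\hstar) + d_n(\epsilon) + \gamma$, a single $\Psi(\fstar)$, while the theorem's hypothesis supplies the stronger $\rho(1-2\tau) > 2\Psi(\fstar) + L(\hstar) + d_n(\epsilon) + \gamma$. For that reason your closing remark is off: the factor of two on $\Psi(\fstar)$ in the theorem is not forced by the slack you identify (your own crude estimate absorbs it with factor one); it arises from the paper's looser \eqref{Rapproxi}-based bound, where $\Psi(\fstar)$ is subtracted twice. A further point in your favour: you correctly split the case dichotomy on $\lVert\widetilde f - \fstar\rVert$ versus $r(\rho)$, which is what the argument requires since \eqref{combine_SB} and \eqref{ineq_Mf} are only available for elements of $\mathscr{F}_1$; the paper's proof writes the cases in terms of $\lVert f - \fstar\rVert$, which as stated would apply those bounds at $f \notin \mathscr{F}_1$ and should be read as applying them at $h$.
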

\begin{proof}
	Fix $S_{n} \in \mathcal{A}$ and $f \in \mathscr{F}\setminus\mathscr{F}_{1}$. Since the real-valued function $t \mapsto \Psi(\fstar + t(f - \fstar))$ is continuous, there exists $h \in \mathscr{F}_{2}$ and $R > 1$ such that $f = \fstar + R(h - \fstar)$. Furthermore, for $R \geq 1$, it holds
	\begin{linenomath}
		\begin{align}
			\label{Rapproxi}
			\left|\Psi(f) - R \lVert \mathscr{D}h - \mathscr{D}\fstar \rVert_{L_{2}(\mu)} \right| \leq \Psi(\fstar).
		\end{align}
	\end{linenomath}
	Indeed, 
	\begin{linenomath}
		\begin{align*}
			\Psi(\fstar) &= \lVert \mathscr{D}\fstar +  R\left(\mathscr{D}h - \mathscr{D}\fstar\right) - g - R\left(\mathscr{D}h - \mathscr{D}\fstar\right) \rVert_{L_{2}(\mu)} \\
			&\geq \left|\Psi(f) - R \lVert \mathscr{D}h - \mathscr{D}\fstar \rVert_{L_{2}(\mu)}\right|.
		\end{align*}
	\end{linenomath}
	Since $R > 1$,
	\begin{linenomath}
		\begin{align*}
			 P_{n}\mathcal{L}_{f} = \frac{R^{2}}{n} \sum_{i=1}^{n} (h - \fstar)^{2}(X_{i}) + \frac{2R}{n} \sum_{i=1}^{n} \xi_{i}(h  - \fstar)(X_{i}) \geq R P_{n}\mathcal{L}_{h},
		\end{align*}
	\end{linenomath} 
	so it follows from \eqref{cond_d_ineq} and \eqref{Rapproxi} that
	\begin{linenomath}
		\begin{align*}
			&P_{n}\mathcal{L}_{f} + \lambda (\Psi(f) - \Psi(\hstar) - L_{n}(\hstar)) \\
			&\geq R \left(P_{n}\mathcal{L}_{h} + R^{-1}\lambda(\Psi(f) - \Psi(\hstar) - L_{n}(\hstar))\right)\\
			&\geq R \left(P_{n}\mathcal{L}_{h} + R^{-1}\lambda(R \lVert \mathscr{D}h - \mathscr{D}\fstar \rVert_{L_{2}(\mu)} - \Psi(\fstar) - \Psi(\hstar) - L(\hstar) - d_{n}(\epsilon) - \gamma)\right)
		\end{align*}
	\end{linenomath}

	Applying \eqref{Rapproxi} with $R = 1$ we get
	\begin{linenomath}
		\begin{align*}
			\lVert \mathscr{D}h - \mathscr{D}\fstar \rVert_{L_{2}(\mu)} \geq \Psi(h) - \Psi(\fstar)
		\end{align*}
	\end{linenomath}
	so $P_{n}\mathcal{L}_{f} + \lambda (\Psi(f) - \Psi(\hstar) - L_{n}(\hstar)) > 0$ if 
	\begin{linenomath}
		\begin{equation*}
			P_{n}\mathcal{L}_{h} + \lambda\left(\Psi(h) - \Psi(\fstar) - R^{-1}\left(\Psi(\fstar) + \Psi(\hstar) + L(\hstar) + d_{n}(\epsilon) + \gamma\right)\right) > 0.
		\end{equation*}
	\end{linenomath}
	
	Since $h \in \mathscr{F}_{2}$ and $R > 1$, the expression inside the outer parenthesis above is greater or equal to
	\begin{linenomath}
		\begin{align*}
			\rho - 2\Psi(\fstar) - L(\hstar) - d_{n}(\epsilon) - \gamma > 0
		\end{align*}
	\end{linenomath}
	in which the inequality follows from the fact that $\rho > 2\Psi(\fstar) + L(\hstar) + d_{n}(\epsilon) + \gamma$ by hypothesis. On the one hand, if $\lVert f - \fstar \rVert_{L_{2}(\mu)} > r(\rho)$ then, by \eqref{combine_SB},
	\begin{linenomath}
		\begin{align}
			\label{res1} \nonumber
			P_{n} \mathcal{L}_{f} + &\lambda \left(\Psi(h) - \Psi(\fstar) - R^{-1}\left(\Psi(\fstar) + \Psi(\hstar) + L(\hstar) + d_{n}(\epsilon) + \gamma\right)\right) \\
			&\geq \frac{\theta}{2} \lVert f - \fstar \rVert_{L_{2}(\mu)} > 0.
		\end{align}
	\end{linenomath}
	On the other hand, it follows from \eqref{ineq_Mf} that, if $\lVert f - \fstar \rVert_{L_{2}(\mu)} \leq r(\rho)$ then
	\begin{linenomath}
		\begin{align}
			\label{res2} \nonumber
			&P_{n}\mathcal{L}_{f} + \lambda \left(\Psi(h) - \Psi(\fstar) - R^{-1}\left(\Psi(\fstar) + \Psi(\hstar) + L(\hstar) + d_{n}(\epsilon) + \gamma\right)\right)\\ \nonumber
			&\geq - 2\left|\frac{1}{n} \sum_{i=1}^{n} \xi_{i}(f - \fstar)(X_{i}) - \mathbb{E}\xi(f - \fstar)(X)\right| +\lambda \left(\rho - 2\Psi(\fstar) - L(\hstar) - d_{n}(\epsilon) - \gamma\right)\\
			&\geq - 2\tau \lambda\rho + \lambda(\rho - 2\Psi(\fstar) - L(\hstar) - d_{n}(\epsilon) - \gamma) > 0
		\end{align}
	\end{linenomath}	
	since $\tau < \tau_{n}(\rho) = 1/2 - (2\Psi(\fstar) + L(\hstar) + d_{n}(\epsilon) + \gamma)/(2\rho)$ by hypothesis. The result follows by combining \eqref{res1} and \eqref{res2}.
\end{proof}

We are now in position to prove Theorem \ref{main_theorem}.

\begin{proof}[Proof of Theorem \ref{main_theorem}]
	It follows from Lemma \ref{lemma_notinF2} and \ref{lemma_inF1} that $\hhat \in \mathscr{F}_{1} \cap \ms{H}$ for all samples $S_{n} \in \mathcal{A}$, therefore with probability at least $1 - 2\delta - 4\exp(-n\epsilon^{2}/2)$. 
\end{proof}
		
We note that from what has been proved above we could get a bound for $\lVert \hhat - \fstar \rVert_{L_{2}(\mu)}$, however this bound does not give a rate of convergence when $n$ increases. Indeed, inequalities \eqref{main_inequality}, \eqref{combine_SB} and \eqref{cond_d_ineq}, and Assumption \ref{concentration}, imply that if $S_{n} \in \mc{A}$, $h \in \mathscr{F}_{1} \cap \ms{H}$ and $\lVert h - \fstar \rVert_{L_{2}(\mu)} \geq r(\rho)$, then
\begin{linenomath}
	\begin{align*}
		L_{n}(h) - L_{n}(\hstar) &\geq P_{n}\mathcal{L}_{h} - \lambda L_{n}(\hstar) \\
		&\geq (\theta/2) \lVert h - \fstar \rVert_{L_{2}(\mu)}^{2} - \lambda (L(\hstar) + d_{n}(\epsilon) + \gamma).
	\end{align*}
\end{linenomath}
Therefore, if $h \in \mathscr{F}_{1} \cap \ms{H}$ is a potential minimiser of $L_{n,\Psi}$ and $\lVert h - \fstar \rVert_{L_{2}(\mu)} \geq r(\rho)$ then, by \eqref{cond_min},
\begin{linenomath}
	\begin{align*}
		0 &\geq L_{n}(h) - L_{n}(\hstar) + \lambda (\Psi(h) - \Psi(\hstar)) \\
		&\geq (\theta/2) \lVert h - \fstar \rVert_{L_{2}(\mu)}^{2} + \lambda(\Psi(h) - \Psi(\hstar) - L(\hstar) - d_{n}(\epsilon) - \gamma)\\
		&\geq (\theta/2) \lVert h - \fstar \rVert_{L_{2}(\mu)}^{2} - \lambda (\Psi(\hstar) + L(\hstar) + d_{n}(\epsilon) + \gamma)
	\end{align*}
\end{linenomath}
so that
\begin{linenomath}
	\begin{equation}
		\label{boundH}
		\lVert \hhat - \fstar \rVert_{L_{2}(\mu)} \leq \left[2\lambda\theta^{-1} (\Psi(\hstar) + L(\hstar) + d_{n}(\epsilon) + \gamma)\right]^{1/2}.
	\end{equation}
\end{linenomath}
From this follows that with probability at least $1 - 2\delta - 4\exp(-n\epsilon^{2}/2)$,
\begin{align*}
	&\lVert \hhat - \fstar \rVert_{L_{2}(\mu)} \leq \max\left\{r(\rho),\left[2\lambda\theta^{-1}(\Psi(\hstar) + L(\hstar) + d_{n}(\epsilon) + \gamma)\right]^{1/2}\right\}.
\end{align*}
Since $\lambda > 1$, the inequality above yields a bound to the convergence rate of $\lVert \hhat - \fstar \rVert_{L_{2}(\mu)}$ only when $\Psi(\hstar) = 0 $ and $L(\hstar) = 0$, but the latter can only occur in the noiseless case. Therefore, this bound is not useful in general.

\subsection{Proof of Theorem \ref{theorem_improve}}

For any $\lambda > 0$, when $\fstar \in \ms{H}$ it holds $\Hstar = \{\fstar\}$ with $\gamma = 0$, and all $h \in \ms{H}$ with $P_{n}\mc{L}_{h} > \lambda (\Psi(\fstar) - \Psi(h))$  can be disregarded as candidates for $\hhat$ (see \eqref{cond_in_F2}). We state and prove results analogous to Lemma \ref{lemma_notinF2} and \ref{lemma_inF1} that will imply Theorem \ref{theorem_improve}. In the following, we consider the same notation introduced in Section \ref{sec_proof1}.

\begin{lemma}
	\label{lemma_notinF2_2}
	Under the assumptions of Theorem \ref{theorem_improve}, for all $S_{n} \in \mc{A}_{0} \cap \mc{A}_{1}$ and $f \in \mathscr{F}_{2}$, $P_{n}\mathcal{L}_{f} > \lambda (\Psi(\fstar) - \Psi(f))$.
\end{lemma}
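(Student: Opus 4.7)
The plan is to imitate Lemma \ref{lemma_notinF2} verbatim, exploiting two simplifications afforded by the hypothesis $\fstar \in \ms{H}$: first, since $\fstar$ is the unique minimiser of $L$ in $\ms{F} \supset \ms{H}$, it is automatically the unique minimiser in $\ms{H}$, hence $\hstar = \fstar$, $\Psi(\hstar) = \Psi(\fstar)$ and $\gamma = 0$; second, because the target set reduces to $\{\fstar\}$, Assumption \ref{concentration} is never invoked, so we can work on the event $\mc{A}_{0} \cap \mc{A}_{1}$ rather than on $\mc{A}$. For $f \in \mathscr{F}_{2}$ we have $\Psi(f) - \Psi(\fstar) = \rho$, so the claim $P_{n}\mc{L}_{f} > \lambda(\Psi(\fstar) - \Psi(f))$ reduces to showing $P_{n}\mc{L}_{f} + \lambda\rho > 0$. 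Crucially, there is no $L_{n}(\hstar)$ term on the right-hand side to control, which is why the concentration assumption can be dispensed with.

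Fix $S_{n} \in \mc{A}_{0} \cap \mc{A}_{1}$ and $f \in \mathscr{F}_{2}$, and split into two cases according to whether $\lVert f - \fstar \rVert_{L_{2}(\mu)}$ exceeds $r(\rho)$. In the first case, $\lVert f - \fstar \rVert_{L_{2}(\mu)} > r(\rho)$, inequality \eqref{combine_SB} is available on $\mc{A}_{0}$ and yields
\begin{equation*}
P_{n}\mc{L}_{f} \geq \frac{\theta}{2} \lVert f - \fstar \rVert_{L_{2}(\mu)}^{2} \geq 0,
\end{equation*}
so $P_{n}\mc{L}_{f} + \lambda\rho > 0$ trivially since $\lambda,\rho > 0$. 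In the second case, $\lVert f - \fstar \rVert_{L_{2}(\mu)} \leq r(\rho)$, I would invoke the lower bound \eqref{lb_PnL} to drop the (nonnegative) quadratic term and leave
\begin{equation*}
P_{n}\mc{L}_{f} \geq -2\,\lvert P_{n}\mc{M}_{f - \fstar}\rvert,
\end{equation*}
and then apply \eqref{ineq_Mf}, which is exactly the content of the event $\mc{A}_{1}$, to conclude $\lvert P_{n}\mc{M}_{f-\fstar}\rvert < \tau\lambda\rho$. Combining the two bounds gives $P_{n}\mc{L}_{f} + \lambda\rho > \lambda\rho(1 - 2\tau) > 0$ provided $\tau < 1/2$.

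I do not anticipate a real obstacle here: the two probabilistic ingredients, \eqref{combine_SB} and \eqref{ineq_Mf}, have already been established and the lemma is a direct simplification of Lemma \ref{lemma_notinF2} with the $L_{n}(\hstar) + d_{n}(\epsilon) + \gamma$ terms set to zero. The only subtlety worth flagging is that the hypothesis $\tau < 1/2 - \Psi(\fstar)/\rho$ in Theorem \ref{theorem_improve} is strictly stronger than the $\tau < 1/2$ actually used in this lemma; the extra $\Psi(\fstar)/\rho$ slack will be consumed in the companion result that plays the role of Lemma \ref{lemma_inF1}, where the term $\Psi(\fstar)$ resurfaces through the analogue of inequality \eqref{Rapproxi}.
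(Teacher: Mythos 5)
Your proof is correct and takes essentially the same approach as the paper's: the same two-case split on whether $\lVert f - \fstar\rVert_{L_{2}(\mu)}$ exceeds $r(\rho)$, with \eqref{combine_SB} in the first case and \eqref{lb_PnL} plus \eqref{ineq_Mf} in the second, and the same reduction to $\tau < 1/2$. Your preliminary remarks (that $\hstar = \fstar$ when $\fstar \in \ms{H}$, and that the extra $\Psi(\fstar)/\rho$ slack in the hypothesis on $\tau$ is reserved for the companion lemma) are accurate and merely make explicit what the paper leaves implicit.
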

\begin{proof}
	Fix $S_{n} \in \mathcal{A}_{0} \cap \mc{A}_{1}$ and $f \in \mathscr{F}_{2}$. On the one hand, if $\lVert f - \fstar \rVert_{L_{2}(\mu)} > r(\rho)$ then, by \eqref{combine_SB},
	\begin{linenomath}
		\begin{align*}
			P_{n} \mathcal{L}_{f} + \lambda (\Psi(f) - \Psi(\fstar)) \geq \frac{\theta}{2} \lVert f - \fstar \rVert_{L_{2}(\mu)} + \lambda\rho > 0.
		\end{align*}
	\end{linenomath}
	On the other hand, it follows from \eqref{ineq_Mf} that, if $\lVert f - \fstar \rVert_{L_{2}(\mu)} \leq r(\rho)$ then
	\begin{linenomath}
		\begin{align*}
			P_{n}\mathcal{L}_{f} + \lambda (\Psi(f) - \Psi(\fstar)) \geq - 2\tau \lambda\rho + \lambda\rho > 0
		\end{align*}
	\end{linenomath}	
	since $\tau < 1/2$.
\end{proof}

\begin{lemma}
	\label{lemma_inF1_2}
	Under the assumptions of Theorem \ref{theorem_improve}, for all $S_{n} \in \mathcal{A}_{0} \cap \mc{A}_{1}$ and $f \in \mathscr{F}\setminus\mathscr{F}_{1}$, $P_{n}\mathcal{L}_{f} > \lambda (\Psi(\fstar) - \Psi(f))$.
\end{lemma}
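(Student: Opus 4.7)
The plan is to mirror the proof of Lemma \ref{lemma_inF1} while exploiting the simplifications available under the hypotheses of Theorem \ref{theorem_improve}. Since $\fstar \in \ms{H}$ we have $\Hstar = \{\fstar\}$ and $\gamma = 0$, so the concentration event $\mc{A}_2$ and the entire correction $L(\hstar) + d_n(\epsilon) + \gamma$ disappear from the analysis, and the target inequality collapses from $P_n\mc{L}_f > \lambda(L_n(\hstar) + \Psi(\hstar) - \Psi(f))$ to the cleaner $P_n\mc{L}_f > \lambda(\Psi(\fstar) - \Psi(f))$, which needs to be established on $\mc{A}_0 \cap \mc{A}_1$.

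First I would fix $S_n \in \mc{A}_0 \cap \mc{A}_1$ and $f \in \ms{F}\setminus\ms{F}_1$, and reuse the ray construction of Lemma \ref{lemma_inF1}: because $\fstar \in \ms{F}_1$ (from $\Psi(\fstar) - \Psi(\hstar) = 0 \leq \rho$) and $t \mapsto \Psi(\fstar + t(f - \fstar))$ is continuous, the segment from $\fstar$ to $f$ crosses $\ms{F}_2$, giving $f = \fstar + R(h - \fstar)$ with $h \in \ms{F}_2 \subset \ms{F}_1$ and $R > 1$. The two reusable inputs are then the envelope bound \eqref{Rapproxi} and the scaling identity $P_n\mc{L}_f \geq R P_n\mc{L}_h$, the latter coming from $(f - \fstar)^2 = R^2(h - \fstar)^2$ and $R^2 \geq R$ for $R \geq 1$.

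Chaining these exactly as before, but with no $L_n(\hstar)$ term to track, produces
\begin{align*}
P_n\mc{L}_f + \lambda(\Psi(f) - \Psi(\fstar)) &\geq R\bigl(P_n\mc{L}_h + R^{-1}\lambda(R\lVert\mathscr{D}h - \mathscr{D}\fstar\rVert_{L_2(\mu)} - 2\Psi(\fstar))\bigr)\\
&\geq R\bigl(P_n\mc{L}_h + \lambda(\rho - 2\Psi(\fstar))\bigr),
\end{align*}
where \eqref{Rapproxi} applied with $R = 1$ gives $\lVert\mathscr{D}h - \mathscr{D}\fstar\rVert_{L_2(\mu)} \geq \Psi(h) - \Psi(\fstar) = \rho$ (the last equality using $\hstar = \fstar$ and $h \in \ms{F}_2$), and $R > 1$ with $\Psi(\fstar) \geq 0$ absorb the $R^{-1}$. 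The hypothesis $\rho > 2\Psi(\fstar)$ of Theorem \ref{theorem_improve} then makes $\lambda(\rho - 2\Psi(\fstar))$ strictly positive, playing exactly the role of $\rho > 2\Psi(\fstar) + L(\hstar) + d_n(\epsilon) + \gamma$ in Lemma \ref{lemma_inF1}. To close the argument I would split on $\lVert f - \fstar\rVert_{L_2(\mu)}$ against $r(\rho)$ as in Lemma \ref{lemma_inF1}: on $\{\lVert f - \fstar\rVert_{L_2(\mu)} > r(\rho)\}$ the small-ball bound \eqref{combine_SB} gives $P_n\mc{L}_f \geq (\theta/2)\lVert f - \fstar\rVert_{L_2(\mu)}^2 > 0$; on $\{\lVert f - \fstar\rVert_{L_2(\mu)} \leq r(\rho)\}$ the multiplier bound \eqref{ineq_Mf} yields $P_n\mc{L}_f \geq -2\tau\lambda\rho$, and the constraint $\tau < 1/2 - \Psi(\fstar)/\rho$ of Theorem \ref{theorem_improve} absorbs this deficit against $\lambda(\rho - 2\Psi(\fstar))$ to close the strict inequality.

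The main obstacle is really just bookkeeping: confirming that the positivity threshold $\rho > 2\Psi(\fstar)$ and the constraint $\tau < 1/2 - \Psi(\fstar)/\rho$ are precisely what the Theorem \ref{main_theorem} conditions $\rho > 2\Psi(\fstar) + L(\hstar) + d_n(\epsilon) + \gamma$ and $0 < \tau < \tau_n(\rho)$ collapse to once $\gamma = 0$ and the $L(\hstar) + d_n(\epsilon)$ correction is deleted, and verifying that the same case analysis still reaches a strict inequality in the absence of that correction term. No new small-ball or complexity ingredient is required; the lemma is a cosmetic simplification of Lemma \ref{lemma_inF1} obtained by observing that the $\Hstar$-concentration machinery can be excised whenever $\fstar \in \ms{H}$.
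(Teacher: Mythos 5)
Your proposal is correct and takes essentially the same approach as the paper's own proof of Lemma \ref{lemma_inF1_2}: the same ray construction producing $f = \fstar + R(h - \fstar)$ with $h \in \ms{F}_2$, the same use of \eqref{Rapproxi} and the scaling $P_n\mc{L}_f \geq R\,P_n\mc{L}_h$, and the same reduction (via $\Psi(\hstar) = \Psi(\fstar)$, which collapses the $L(\hstar) + d_n(\epsilon) + \gamma$ correction) to the threshold $\rho - 2\Psi(\fstar) > 0$ and the constraint $\tau < 1/2 - \Psi(\fstar)/\rho$, with the identical case split against $r(\rho)$ invoking \eqref{combine_SB} and \eqref{ineq_Mf}. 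The paper phrases the intermediate sufficient condition as $P_n\mc{L}_h + \lambda\bigl(\Psi(h) - \frac{2+R}{R}\Psi(\fstar)\bigr) > 0$, while you lower-bound $\Psi(h) - \frac{2+R}{R}\Psi(\fstar)$ by $\rho - 2\Psi(\fstar)$ one step earlier, but this is a cosmetic reordering and the two arguments are substantively identical.
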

\begin{proof}
	Fix $S_{n} \in \mathcal{A}_{0} \cap \mc{A}_{1}$ and $f \in \mathscr{F}\setminus\mathscr{F}_{1}$. Fix $h \in \mathscr{F}_{2}$ and $R > 1$ such that $f = \fstar + R(h - \fstar)$. A deduction analogous to the proof of Lemma \ref{lemma_inF1}, by taking $\Psi(\hstar) = \Psi(\fstar)$ in that proof, yields that $P_{n}\mathcal{L}_{f} > \lambda (\Psi(\fstar) - \Psi(f))$ if 
	\begin{linenomath}
		\begin{equation*}
			P_{n}\mathcal{L}_{h} + \lambda\left(\Psi(h) - \frac{2+R}{R}\Psi(\fstar)\right) > 0.
		\end{equation*}
	\end{linenomath}
	Since $h \in \mathscr{F}_{2}$ and $R > 1$,
	\begin{linenomath}
		\begin{align*}
			\Psi(h)& - \frac{2+R}{R}\Psi(\fstar) \geq \rho - 2\Psi(\fstar) > 0
		\end{align*}
	\end{linenomath}
	as $\rho > 2\Psi(\fstar)$ by hypothesis. On the one hand, if $\lVert f - \fstar \rVert_{L_{2}(\mu)} > r(\rho)$ then, by \eqref{combine_SB},
	\begin{linenomath}
		\begin{align*}
			P_{n} \mathcal{L}_{f} + \lambda \left(\Psi(h) - \frac{2+R}{R}\Psi(\fstar)\right) \geq \frac{\theta}{2} \lVert f - \fstar \rVert_{L_{2}(\mu)} > 0.
		\end{align*}
	\end{linenomath}
	On the other hand, it follows from \eqref{ineq_Mf} that, if $\lVert f - \fstar \rVert_{L_{2}(\mu)} \leq r(\rho)$ then
	\begin{linenomath}
		\begin{align*}
			&P_{n}\mathcal{L}_{f} + \lambda \left(\Psi(h) - \frac{2+R}{R}\Psi(\fstar)\right) \geq - 2\tau \lambda\rho + \lambda(\rho - 2\Psi(\fstar)) > 0
		\end{align*}
	\end{linenomath}	
	since $\tau < 1/2 - \Psi(\fstar)/\rho$.
\end{proof}

\begin{proof}[Proof of Theorem \ref{theorem_improve}]
	It follows from Lemma \ref{lemma_notinF2_2} and \ref{lemma_inF1_2} that $\hhat \in \mathscr{F}_{1} \cap \ms{H}$ for all samples $S_{n} \in \mathcal{A}_{0} \cap \mc{A}_{1}$, therefore with probability at least $1 - 2\delta - 2\exp(-n\epsilon^{2}/2)$. A deduction analogous to \eqref{boundH} by taking $\Psi(\hstar) = \Psi(\fstar)$ implies that, if $S_{n} \in \mc{A}_{0} \cap \mc{A}_{1}$, $h \in \mathscr{F}_{1} \cap \ms{H}$ and $\lVert h - \fstar \rVert_{L_{2}(\mu)} \geq r(\rho)$, then
	\begin{linenomath}
		\begin{equation*}
			\lVert \hhat - \fstar \rVert_{L_{2}(\mu)} \leq \left[2\lambda\theta^{-1} \Psi(\fstar)\right]^{1/2}.
		\end{equation*}
	\end{linenomath}
\end{proof}

\section*{Acknowledgements}

This work was partially funded by grants \#22/06211-2 and \#23/00256-7 São Paulo Research Foundation (FAPESP). Part of this work was carried when the author was a visiting scholar at the Department of Electrical and Computer Engineering, Texas A\&M University, and a postdoctoral researcher at the Institute of Mathematics and Statistics, University of São Paulo.

\bibliographystyle{plain}
\bibliography{Ref}

\end{document}